\newtheorem{lemma}{Lemma}
\newtheorem{proposition}{Proposition}
\theoremstyle{definition}
\newtheorem{definition}{Definition}
\crefname{appendix}{App.}{Apps.}
\crefname{proposition}{Prop.}{Props.}
\crefname{theorem}{Thm.}{Thms.}
\crefname{section}{Sec.}{Secs.}
\crefname{definition}{Def.}{Defs.}
\crefname{lemma}{Lemma}{Lemma}
\crefname{eqn}{Eq.}{Eqs.}
\crefname{eq}{Eq.}{Eqs.}
\title{Generalized Flow Matching for Transition\\ Dynamics Modeling}
\author{%
  Haibo Wang\thanks{Equal contribution.} \\
  Tokyo Tech \\
  \And
  Yuxuan Qiu\footnotemark[1] \\
  Tokyo Tech \\
  \And
  Yanze Wang \\
  MIT \\
  \And
  Rob Brekelmans\\
  Vector Institute \\
  \And
  Yuanqi Du\thanks{Correspondence.} \\
  Cornell \\
}
\begin{document}

\maketitle

\begin{abstract}
Simulating transition dynamics between metastable states is a fundamental challenge in dynamical systems and stochastic processes with wide real-world applications in understanding protein folding, chemical reactions and neural activities. However, the computational challenge often lies on sampling exponentially many paths in which only a small fraction ends in the target metastable state due to existence of high energy barriers. To amortize the cost, we propose a data-driven approach to warm-up the simulation by learning nonlinear interpolations from local dynamics. Specifically, we infer a potential energy function from local dynamics data. To find plausible paths between two metastable states, we formulate a generalized flow matching framework that learns a vector field to sample propable paths between the two marginal densities under the learned energy function. Furthermore, we iteratively refine the model by assigning importance weights to the sampled paths and buffering more likely paths for training. We validate the effectiveness of the proposed method to sample probable paths on both synthetic and real-world molecular systems.
\end{abstract}

\section{Introduction}

Transition dynamics simulation aims to sample transition paths between two metastable states, which is a fundamental challenge in dynamical systems, stochastic processes, and molecular simulations, with broad applications~\cite{bolhuis2002transition,vanden2010transition,aranganathan2024modeling,pattanaik2020generating,duan2023accurate,duan2024react}. The key computational obstacle lies in the rarity of transition events such that it requires long-run molecular dynamics (MD) simulations to go over high energy barriers. In addition, there exists an infinite amount of paths that do not end in the target state.

To address these limitations, early work leverage Markov chain Monte Carlo (MCMC) approaches to mix the path distributions~\cite{dellago2002transition}. However, MCMC-based simulation in such high-dimensional spaces suffer from slow mixing time. Alternatively, later work formulate it as a path integral control problem such that an external control is learned to guide the stochastic process for certain terminal conditions (i.e. arrive at the target in finite time )~\cite{holdijk2024stochastic,yan2022learning,das2021reinforcement,rose2021reinforcement}. Nevertheless, the path integral control method is known as a shooting method with high variance (i.e. the probability to hit the target is very low). Recently, a variational formulation of learning Doob's h-transform is proposed, which leads to a collocation method that optimizes a family of tractable probability paths~\cite{du2024doob}. 

Nevertheless, all previous work require an extensive amount of expensive energy evaluations to sample from the path distributions. In this paper, we study the possibility to find a low-cost approximation of the transition paths distribution. In many studies of rare events in molecular systems, local short-run molecular simulations are used to provide information about transitions~\cite{sun2022multitask}. Our approach builds on the idea that even with very short-run local molecular simulations, we can initialize transition paths more effectively than common methods like linear interpolation or heuristic-based method~\cite{smidstrup2014improved}. Specifically, we aim to learn a ``potential energy'' from these short-run trajectories which indicates the likelihood of the states despite they are by no means well-equilibrated. 

Building on the recent progress of Schrödinger bridge and flow matching models~\cite{liugeneralized,neklyudovcomputational,kapusniak2024metric}, we formulate the problem as a generalized flow matching problem which introduces an additional potential energy as constraint than the normal flow matching setup~\cite{lipmanflow}. We solve the generalized flow matching problem by learning a vector field to moving from the distribution of one metastable state to another that minimizes the overall transportation cost. To further improve the quality of the learned path, we apply importance sampling to resample transition paths reweighted by the path probability induced by the true energy function.

We validate the effectiveness of the proposed method on both synthetic data and real-world molecular systems. The results demonstrate that our method can sample high-quality transition paths (close to saddle points) with a significantly less energy evaluations to generate local dynamics data.

\section{Background}

\subsection{Score \& Flow Matching}

Score-based generative models or diffusion models~\cite{songscore,ho2020denoising} build a generative process from tractable prior distribution $p_T(x)$ (e.g. Gaussian) to complex data distribution $p_0(x) \approx p_{\text{data}}(x)$ where $x \in \mathbb{R}^d$. It can be learned by reversing a forward stochastic process from $p_0(x)$ to $p_T(x)$, as follows:
\begin{align}
\textnormal{d}x_t &= \textnormal{f}(x_t, t)\textnormal{d}t + g(t) \textnormal{d}W_t\\
\textnormal{d}x_t &= \left[\,\textnormal{f}(x_t, t) - g^2(t)\nabla_x \log p_t(x_t)\,\right] \textnormal{d}t + g(t)\textnormal{d}\bar{W}_t 
\end{align}
where $\textnormal{f}: \mathbb{R}^d \times \mathbb{R} \rightarrow \mathbb{R}^d$ is the drift, $g: \mathbb{R} \rightarrow \mathbb{R}$ is the scalar diffusion coefficient, $W_t$, $\bar{W}_t$ are $d$-dimensional Wiener process and the score network $s_\theta(x_t, t)$ is learned to match the conditional score function $\nabla_x \log p_t(x_t|x_0)$:
\begin{equation}
\min_s\, \mathbb{E}_{\mathcal{U}(t)} \mathbb{E}_{p_0} \mathbb{E}_{p_{t|0}} \left[\,\| s_\theta(x_t, t) - \nabla_x \log p_t(x_t | x_0)\|^2 \,\right]
\end{equation}

Flow matching~\cite{lipmanflow,albergo2023stochastic,liuflow} generalizes the idea of score matching by extending it to a general family of Gaussian conditional probability paths $p_t(x_t|x_0) = \mathcal{N}(x_t| \mu_t(x_0), \sigma_t(x_0)^2 I)$ and regress the vector field $v_\theta(x_t, t)$ to the conditional vector field $u_t(x_t|x_0) = \frac{\sigma_t'(x_0)}{\sigma_t(x_0)}(x_t - \mu_t(x_0)) + \mu_t'(x_0)$ corresponding to the Gaussian path $p_t(x_t|x_0)$: 
\begin{equation}
\label{eq:fm}
\min_v\, \mathbb{E}_{\mathcal{U}(t)} \mathbb{E}_{p_0} \mathbb{E}_{p_{t|0}}  \left[\, \| v_\theta(x_t, t) - u_t(x_t | x_0)\|^2 \,\right]
\end{equation}

\subsection{Generalized Schrödinger Bridge Matching}

One key ingredient behind the success of score-based generative and flow matching models is the simulation-free forward process (diffusion paths or Gaussian paths) that can be evaluated analytically. However, there is a broader class of problems with nonlinear drift or constraint that require simulation of the forward process. The generalized Schrödinger bridge problem is more general such that in addition to learn a stochastic process with minimal kinetic energy that connects $p_0(x)$ and $p_T(x)$, it further minimizes the potential energy $V(\cdot)$ along the path as~\cite{liugeneralized}:

\begin{align}
\label{eq:GSB}
&\min_{\textnormal{f},p_t} 
\int_0^T 
\int 
\left( \frac{1}{2}\| \textnormal{f}_\theta(x_t)\|^2 + V(x_t) \;\right)
p_t(x_t)  \textnormal{d}x 
\textnormal{d}t, \\
\text{s.t.} \;\;\; 
\partial_t p_t(x_t) &= -\nabla \cdot \big( p_t(x_t) \textnormal{f}_\theta(x_t, t) \big) + \frac{1}{2}g(t) \Delta p_t(x_t), \, p_0 = \mu_0,\,  p_T = \mu_T 
\end{align}
To solve the problem in \cref{eq:GSB},~\cite{liugeneralized} propose to approximate the true marginal distribution $p_t$ of the stochastic process with Gaussian density conditioned on both end points $p_{t|0,T}(x_t) = \mathcal{N}(x_t| \mu_t, \sigma_t^2I)$ where $\mu_t$ and $\sigma_t$ can be parameterized by neural networks or spline. Similar to flow matching, 
approximating the marginals using Gaussian paths allows for
simulation-free sampling
and thus 
easy optimization within the
the objective \cref{eq:GSB}. In the end, $\textnormal{f}_\theta$ is learned by matching against the vector field corresponding to the Gaussian path similar to \cref{eq:fm}. It is worth noting that a similar general framework has also been proposed in~\cite{neklyudov2023action,neklyudovcomputational}.

\section{Generalized Flow Matching for Transition Dynamics Modeling}

\subsection{Short-Run Molecular Dynamics}

Molecular dynamics simulation follows Newton's equation of motion such that $M\ddot{x_t} = -\nabla_x U(x_t)$, where $M$ is the mass of the particles, $U:\mathbb{R}^{N\times 3} \rightarrow \mathbb{R}$ is the potential energy function and $x = (x_1,\dots, x_N) \in \mathbb{R}^{N\times 3}$ is one state of the molecular system. Nevertheless, one common goal of interest to run molecular simulation is to sample from the thermal equilibrium (known as the NVT ensemble) with a heat bath at a fixed temperature, which is taken into account by running the following Langevin equation. 
\begin{equation}
\begin{pmatrix}
dx_t \\
dv_t \\
\end{pmatrix}
= 
\begin{pmatrix}
v_t \\
-M^{-1}\nabla_x U(x_t) - \gamma v_t \\
\end{pmatrix}
\textnormal{d}t +
\begin{pmatrix}
0 & 0\\
0 &  M^{-1/2}\sqrt{2\gamma k_B T_{\text{p}}} \\
\end{pmatrix}\textnormal{d}W_t
\label{eqn:scond}
\end{equation}
where $\gamma$ is the friction coefficient, $k_B$ is the Boltzmann constant and temperature $T_p$. The stationary distribution of running this Markov process is the thermal equilibrium $p(x) \propto e^{-U(x)/k_B T_p}$.
However, with the goal of learning transition paths with a limited number of potential energy evaluations, we do not attempt to sample from the equilibrium. 

Instead, we consider two predefined metastable states $A$ and $B$, which correspond to local minima on the potential energy landscape.  
We run only local molecular dynamics simulations around two metastable state to initialize our subsequent methods.   In particular, we let $\Phi^\rightarrow_{s:t}(y)$ denote simulating the dynamics in \cref{eqn:scond} initialized at point $y$ and time $s$ for time $t-s$ and $t > s$.  For $t<s$, we integrate backward in time and write $\Phi^\leftarrow_{s:t}(y)$.
 
For a short time interval $s$ chosen as a design decision, our subsequent methods are initialized from the induced distributions $\mu_0(x) = (\Phi^\rightarrow_{-s:0})_{\#}\delta_A$ and $\mu_T(x) = (\Phi^\leftarrow_{T+s:T})_{\#}\delta_B$ where we abbreviate $\delta_A(y) = \delta(A-y)$, $\delta_B(y) = \delta(B-y)$.

We illustrate examples of $\mu_0$ and $\mu_T$ resulting from different choices of $s$ in \cref{fig:Mdata}, where $A$ and $B$ are indicated using green points and $\mu_0$ and $\mu_T$ are indicated using red and blue points, respectively.   Our hope is that short-run MD simulations provide a useful initialization for our Generalized Flow Matching transport problem with learned potentials.

\subsection{Generalized Flow Matching}
Inspired by the generalized Schrödinger bridge problem, we formulate the problem of finding feasible transition paths as a distribution matching problem such that we learn a vector field to transport between two given marginal distributions:
\begin{align}
\begin{split}
&\mathcal{L}_{\text{GFM}} = ~ \min_{v_t^\theta, p_t} \int_0^T \int \left( \frac{1}{2}\| v_t^\theta(x_t)\|^2 + V_t(x_t) \right) \; p_t(x_t)  \textnormal{d}x \textnormal{d}t\\
&~ \text{s.t.}\quad \partial_t p_t(x_t)= -\nabla \cdot \big( p_t(x_t) v_t^\theta(x_t) \big), \, \, p_0 = \mu_0,\,  p_T = \mu_T
\label{eqn:main_obj}
\end{split}
\end{align}
where $\mu_0$ and $\mu_T$
are the density of the states explored by a local molecular dynamics simulation around the metastable states $A$ and $B$,
described above.

In general, the kinetic energy corresponds to the speed of transport while the potential energy 
defines a state cost corresponding to how 
probable the states are (e.g. likelihood).   
Together these cost functions define an optimal interpolation of marginals $p_t^*$ between $\mu_0$ and $\mu_T$, which reduces to the 
dynamic formulation of optimal transport for $V_t(\cdot) = 0$~\cite{peyre2019computational}. It is worth noting that the potential energy function $V(\cdot)$ does not have to be the same as $U(\cdot)$ used in molecular dynamics simulation, and we now discuss how to learn a surrogate potential energy functions from $\mu_0, \mu_T$ to guide the sampling of transition paths.

\subsection{Inferring Kinetic and Potential Energy from Data} 
\label{sec:potential}
As one of the main goal of this study is to reduce the number of potential energy evaluation $U(\cdot)$, we show how we can learn a surrogate energy function $V(\cdot)$ from the local dynamics data in two ways~\cite{pooladianneural}. 

\textit{Latent interpolation}:  We propose to learn an autoencoder that maps high-dimensional data into low-dimensional representations such that it preserves structural information~\cite{liugeneralized}. The hypothesis is that the latent space compresses semantic information from data thus better measure distance than the ambient Euclidean space. Specifically, we map data $(x_0, x_T)$ to the latent space as $(z_0, z_T)$. We define the potential energy as the deviation of the state $x_t$ from certain interpolation (e.g. spherical interpolation) of $x_0$ and $x_T$ in the latent space $I(z_0, z_T, t)$:
\begin{equation}
V(x_t) = \| x_t - \text{Decoder}(I(z_0, z_T, t)) \|^2_2
\label{eqn:ls}
\end{equation}

\textit{Metric learning}: Another natural way to learn the potential energy is through metric learning such that the metric informs how dense the data is around a particular location~\cite{arvanitidis2021geometrically,arvanitidis2018latent}. Once the metric $G:\mathbb{R}^{N\times 3} \rightarrow \mathbb{R}^{N\times N}$ is learned, the kinetic energy term in \cref{eqn:main_obj} will become~\cite{kapusniak2024metric}:
\begin{equation}
\|v_t\|_G = v_t^T G(x_t) v_t = v_t^T v_t + v_t^T(G(x_t)- I) v_t = \|v_t\|_2 + V(x_t, v_t)
\label{eqn:rbf_metric}
\end{equation}
where the potential energy is implied as:
\begin{equation}
V(x_t, v_t) = v_t^T(G(x_t) - I)v_t
\label{eqn:metric}
\end{equation}
It is worth noting this can be equivalently considered as defining the kinetic energy under a learned metric tensor $G$ because potential energy often only depends on $x_t$, where in the other case, the kinetic energy is defined under the trivial diagonal metric in Euclidean space. We leave the details about metric learning in \Cref{appendix:metric_learning}.

\subsection{Conditional Generalized Flow Matching Objective}

To facilitate optimization of \cref{eqn:main_obj}, we derive the following conditional or `bridge' objective.

\begin{definition}\label{def:cond_control}
Assume the path of marginals $p_t$ decomposes such that $p_t(x_t) = \int \int p_{0,T}(x_0,x_T) p_{t|0,T}(x_t) dx_0 dx_T$ and there exists $v_{t|0,T}$ such that $\partial_t p_{t|0,T} = - \nabla \cdot \left( p_{t|0,T} v_{t|0,T}\right)$.   
We define the following \textit{conditional} GFM objective
\begin{align}
    \mathcal{L}_{\text{cGFM}}(x_0,x_T) := &~  \min_{p_{t|0,T}, v_{t|0,T}} \int_0^T 
\int \left( 
\frac{1}{2}\| v_{t|0,T}(x_t)\|^2 + V_t(x_t) 
\right) 
\; p_{t|0,T}(x_t|x_0,x_T) \textnormal{d}x_t
\textnormal{d}t
\label{eq:cond_gfm_obj}
\\
&\text{s.t.}\quad  \partial_t ~ p_{t|0,T}(x_t) = -\nabla \cdot \big( p_{t|0,T}(x_t) v_{t|0,T}(x_t) \big), \quad 
p_{0|0,T} = \delta_{x_0}, ~~ p_{T|0,T} = \delta_{x_T}. \nonumber
\end{align}
where we abbreviate the boundary condition $p_{0|0,T}(x) = \delta(x-x_0)$
and optimize a conditional objective for each sample from the joint distribution $(x_0,x_T) \sim p_{0,T}$.
\end{definition}

The objective in \cref{def:cond_control} is a deterministic analogue the conditional stochastic control objective in Prop. 2 of \cite{liugeneralized}.  We next show that the 
optimizing the conditional objective yields an \textit{upper bound} on the marginal objective, which was not emphasized by \cite{liugeneralized}.  See \cref{app:conditional_pf} for proof.

\begin{proposition}\label{prop:cgfm_to_gfm}
Taking the expectation of the conditional objective over $(x_0, x_T) \sim p_{0,T}$ and enforcing that $p_{0,T}\in\Pi(\mu_0,\mu_T)$ satisfies the boundary conditions yields an upper bound on $\mathcal{L}_{\text{GFM}}$,
\begin{align}
    \mathcal{L}_{\text{GFM}}  \leq  
    \mathbb{E}_{p_{0,T}(x_0,x_T)}\Big[ \mathcal{L}_{\text{cGFM}}(x_0,x_T)\Big]
\quad \text{s.t.} \quad p_0 = \mu_0, ~ p_T = \mu_T \label{eq:cgfm_to_gfm}.
\end{align}
\end{proposition}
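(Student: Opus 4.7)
The plan is to take optimal conditional solutions and marginalize them to construct a feasible candidate for the marginal GFM problem, then bound its cost using Jensen's inequality on the kinetic term.

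First I would fix any coupling $p_{0,T} \in \Pi(\mu_0, \mu_T)$ and, for each $(x_0, x_T)$, let $(p_{t|0,T}^*, v_{t|0,T}^*)$ denote a minimizer (or near-minimizer) of the conditional problem in \cref{def:cond_control}. Define candidate marginals and a candidate vector field via the mixture and the standard marginalization formula,
\begin{align}
p_t(x_t) &= \int p_{0,T}(x_0,x_T)\, p_{t|0,T}^*(x_t|x_0,x_T)\, \textnormal{d}x_0 \textnormal{d}x_T, \\
v_t(x_t) &= \frac{1}{p_t(x_t)} \int p_{0,T}(x_0,x_T)\, p_{t|0,T}^*(x_t|x_0,x_T)\, v_{t|0,T}^*(x_t|x_0,x_T)\, \textnormal{d}x_0 \textnormal{d}x_T.
\end{align}
By construction, $p_0 = \mu_0$ and $p_T = \mu_T$, so the boundary constraints of $\mathcal{L}_{\text{GFM}}$ hold.

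Next I would verify that $(p_t, v_t)$ satisfy the marginal continuity equation. This follows by integrating the conditional continuity equation against $p_{0,T}$, swapping the time derivative and integral, and recognizing that $\nabla \cdot (p_{t|0,T}^* v_{t|0,T}^*)$ marginalizes precisely to $\nabla \cdot (p_t v_t)$ because of how $v_t$ was defined; this is exactly the trick that makes flow matching work. So the candidate pair is feasible for $\mathcal{L}_{\text{GFM}}$.

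The core inequality comes from Jensen applied pointwise in $x_t$. Writing the posterior $\pi(x_0, x_T | x_t) = p_{0,T}(x_0,x_T) p_{t|0,T}^*(x_t|x_0,x_T)/p_t(x_t)$, the definition of $v_t$ states $v_t(x_t) = \mathbb{E}_{\pi}[v_{t|0,T}^*(x_t)\mid x_t]$, so by convexity of $\tfrac{1}{2}\|\cdot\|^2$,
\begin{equation*}
\tfrac{1}{2}\|v_t(x_t)\|^2 \;\leq\; \mathbb{E}_{\pi}\!\left[\tfrac{1}{2}\|v_{t|0,T}^*(x_t)\|^2 \,\Big|\, x_t\right].
\end{equation*}
Multiplying by $p_t(x_t)$ and integrating over $x_t$ converts the conditional expectation back into a joint integral against $p_{0,T}\, p_{t|0,T}^*$. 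The potential term $\int V_t(x_t) p_t(x_t) \textnormal{d}x_t$ is handled by the same tower-of-expectations step but with equality, since $V_t$ does not depend on $(x_0,x_T)$. Integrating over $t \in [0,T]$ and taking expectation over $p_{0,T}$ yields
\begin{equation*}
\int_0^T \!\!\int \left( \tfrac{1}{2}\|v_t\|^2 + V_t \right) p_t\, \textnormal{d}x\, \textnormal{d}t \;\leq\; \mathbb{E}_{p_{0,T}}\!\left[\mathcal{L}_{\text{cGFM}}(x_0,x_T)\right].
\end{equation*}
Since the left-hand side is the cost of a feasible candidate for $\mathcal{L}_{\text{GFM}}$, taking the infimum over such candidates gives the claim.

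The main obstacle is the continuity-equation verification: one must justify passing derivatives inside the integral and, more subtly, that the divergence of the mixture flux equals the mixture of divergences, which is exactly why $v_t$ must be defined as the conditional expectation of $v_{t|0,T}^*$ weighted by $p_{t|0,T}^*$ rather than, say, an unweighted average. Jensen and the potential term are then essentially routine.
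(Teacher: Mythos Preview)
Your proposal is correct and follows essentially the same route as the paper: both arguments build the marginal pair $(p_t, v_t)$ from the conditionals via the standard flow-matching marginalization (the paper's \cref{lemma:cont}), verify feasibility for the continuity equation, and then bound the kinetic term by Jensen's inequality applied to $v_t(x_t) = \mathbb{E}_{p_{0,T|t}}[v_{t|0,T}(x_t)]$ while the potential term passes through with equality by the tower property. The only cosmetic difference is that you start from the conditional optimizers and construct a feasible marginal candidate, whereas the paper phrases it as bounding any factorizable $(p_t, v_t)$ from above by its conditional decomposition before minimizing; the substance is identical.
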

  For a learned potential energy (\cref{sec:potential}), our computational approach seeks to find an approximate solution to the marginal GFM problem (\cref{eqn:main_obj}) by representing a coupling $p_{0,T}$ and parameterizing $p_{t|0,T}$ and $v_{t|0,T}^\phi$.   However, to generate unconditional transition paths from an initial $x_0$ or $x_T$ and facilitate a parameterization of $p_{0,T}$, we also consider learning a marginal vector field $v_t^\theta$ in \cref{eqn:main_obj}.

\textbf{Neural Spline Parameterization of $v_{t|0,T}^{\phi}$}
The conditional objective in \cref{def:cond_control} requires $p_{t|0,T}$ and $v_{t|0,T}^\phi$ which satisfy the continuity equation and respect the boundary conditions $\delta_{x_0}$, $\delta_{x_T}$.  
To satisfy these desiderata, 
we use neural networks to parameterize a spline interpolation in the sample space that satisfies the boundary conditions:
\begin{equation}
x_{t}^\phi = (1-t) x_0 + t x_T + t(1-t)\text{NN}_\phi(x_0, x_T, t)
\label{eqn:interp}
\end{equation}
This spline induces a path of marginal distributions $p_{t|0,T}$, 
where the velocity $v_{t|0,T}$ satisfying the continuity equation in \cref{eq:cond_gfm_obj} is directly tractable as \cite{albergo2023stochastic, tong2023improving}
\begin{equation}
v_{t|0,T}^\phi(x_t) = x_T - x_0 + t(1-t) \dot{\text{NN}}_{\phi}(x_0, x_T, t) + (1-2t) \text{NN}_{\phi}(x_0, x_T, t)
\label{eqn:vector}
\end{equation}
This choice of $v_{t|0,T}^\phi$ thus satisfies both the endpoint constraints in \cref{eqn:interp} and the conditional continuity equation constraint for the marginals $p_{t|0,T}^\phi$ induced by $x_{t}^\phi$ sampling.
\begin{definition}
    Define $\mathcal{L}_{\text{cGFM}}^\phi(x_0,x_T)$ as the value of the objective in \cref{eq:cond_gfm_obj} for possibly suboptimal $x_t^\phi \sim p_{t|0,T}$ and $v_{t|0,T}^\phi$ which satisfy the constraints, as is guaranteed by our parameterization above.   Using \cref{eq:cond_gfm_obj} and \cref{eq:cgfm_to_gfm}, we have
    \begin{align}
        \hspace*{-.2cm} \mathcal{L}_{\text{cGFM}}(x_0,x_T)  
        \leq 
        \mathcal{L}_{\text{cGFM}}^\phi(x_0,x_T), \quad 
        \mathcal{L}_{\text{GFM}}  \leq \mathbb{E}_{p_{0,T}}\left[\mathcal{L}_{\text{cGFM}}^\phi(x_0,x_T)\right] \,\, \forall p_{0,T} \in \Pi(\mu_0,\mu_T).
    \end{align}
    \end{definition}

Using this parameterization, we can tractably optimize the objective $\mathcal{L}_{\text{cGFM}}^\phi(x_0,x_T)$ 
as a function of $\phi$.  
Expanding to write an expectation over samples from $p_{0,T}$, we minimize the following the objective over $v_{t|0,T}^\phi$,
\begin{align}
\begin{split}
\mathcal{L}^{p_{0,T}}_{\text{spline}}(\phi)  &= 
\mathbb{E}_{p_{0,T}}\left[\mathcal{L}_{\text{cGFM}}^\phi(x_0,x_T) \right] \\  
&=  
\mathbb{E}_{p_{0,T}(x_0, x_T)} \left[ \mathbb{E}_{\mathcal{U}(t)} \mathbb{E}_{p_{t|0,T}(x_t)}
\left[\frac{1}{2}\|v_{\phi}(x_0, x_T, t)\|_2^2 + V(x_\phi(x_0, x_T, t))\right] \right]
\label{eqn:spline}    
\end{split}
\end{align}
where we have replaced the integral over $\int_0^T (\cdot) dt = \mathbb{E}_{\mathcal{U}(t)}[(\cdot)]$ with an expectation over the uniform distribution.  In practice, we sample time points $t \sim \mathcal{U}(t)$ during training and use an empirical expectation over $(x_0,x_T) \sim p_{0,T}$.

\begin{algorithm}[t]
    \caption{Generalized Flow Matching Algorithm}
    \label{alg:iter_bend}
    \scalebox{1.0}{
    \begin{minipage}{\linewidth}
    \begin{algorithmic}[1]
        \REQUIRE data $p_0, p_T$, learned potential energy $V(\cdot)$, neural spline network $\text{NN}_\phi$, velocity network $v_\theta$ (initialized as the solution to initial coupling, e.g. $\mu_0 \otimes \mu_T$), replay buffer $\mathcal{B}$, maximum replay buffer size $|\mathcal{B}|$
        \WHILE{not converged}
            \STATE Sample $x_0 \sim p_0$, $x_T \sim \text{ODE}([0, T], x_0, v_\theta)$, $t\sim \mathcal{U}(0,T)$
            \STATE $x_{t}^\phi = (1-t) x_0 + t x_T + t(1-t)\text{NN}_\phi(x_0, x_T, t)$ \hfill eq. (\ref{eqn:interp})
            \STATE $v_{t|0,T}^\phi(x_t) = x_T - x_0 + t(1-t) \dot{\text{NN}}_{\phi}(x_0, x_T, t) + (1-2t) \text{NN}_{\phi}(x_0, x_T, t)$ \hfill eq. (\ref{eqn:vector})
        
            \STATE $\mathcal{L}^{p_{0,T}}_{\text{spline}}(\phi) = \frac{1}{2}\| v_{t|0,T}^\phi \|^2 + V(x_{t})$ \hfill eq. (\ref{eqn:spline})
            \STATE $\mathcal{L}_{\text{flow}}(\theta) = \| v_t^\theta(x_{t}) - {\text{sg}}\big( v_{t|0,T}^\phi(x_0, x_T, t)\big)\|^2$ \hfill eq. (\ref{eqn:flow})
            \STATE Update $\phi$, $\theta$ using $\nabla_\phi \mathcal{L}_{\text{spline}}(\phi)$, $\nabla_\theta \mathcal{L}_{\text{flow}}(\theta)$
            
            \IF{Replay Buffer}
            \STATE $\{x_0^i\}_{i=0}^N \sim p_0$ 
            \STATE $\{\gamma^i\}_{i=0}^N = \text{ODE}([0,T],\{x_0^i\}_{i=0}^N, v_\theta)$, $\gamma = x_{0:T}$
            \STATE Compute weight $\tilde{w}^i$ for $\gamma^i$, add $(\gamma^i, \tilde{w}^i)$ to $\mathcal{B}$ \hfill eq. (\ref{eqn:weight})

            \WHILE{Replay Buffer not converged}
            \STATE Sample $\gamma \sim \mathcal{B}$, $t\sim \mathcal{U}(0,T)$, $x_0 \sim p_0$, $x_T \sim \text{ODE}([0, T], x_0, v_\theta)$
 
            \STATE $\mathcal{L}_{\text{replay}}(\phi) = \| x_\phi(x_0, x_T, t) - \gamma_t \|^2 + \| v_\phi(x_0, x_T, t) \|^2$ \hfill{eq. (\ref{eqn:replay})}
            \STATE $\mathcal{L}_{\text{flow}}(\theta) = \| v_t^\theta(x_t) - \text{sg}\left( v_{t|0,T}^\phi(x_0, x_T, t)\right) \|^2$
            \STATE Update $\phi$, $\theta$ using $\nabla_\phi \mathcal{L}_{\text{replay}}(\phi)$, $\nabla_\theta \mathcal{L}_{\text{flow}}(\theta)$
            \ENDWHILE

            \ENDIF
            
        \ENDWHILE
        
        \RETURN $\text{NN}_\phi, v_\theta$
    \end{algorithmic}
    \end{minipage}
    }
\end{algorithm}
\paragraph{Marginal Vector Field $v_t^\theta$}
To perform unconditional sampling, we need to learn a marginal vector field $v_t^\theta$ which simulates the marginals $p_t$ induced by a particular $p_{0,T}$, $p_{t|0,T}^\phi$, and $v_{t|0,T}^\phi$.  This also corresponds to translating a candidate solution for the objective $\mathbb{E}_{p_{0,T}}[\mathcal{L}_{cGFM}^\phi(x_0,x_T)]$ in \cref{eq:cgfm_to_gfm} into a solution to the marginal problem in \cref{eqn:main_obj}.
Following similar arguments as \cite{lipmanflow, albergo2023stochastic, tong2023improving}, one can show that the marginal vector field 
   \begin{align}
        v_{t}^{(p_{0,T},\phi)}(x_t) = \mathbb{E}_{p_{0,T}(x_0,x_T)}\left[ \frac{p_{t|0,T}^\phi(x_t)}{p_t(x_t)} v_{t|0,T}^\phi(x_t) \right]
    \end{align}
    satisfies the continuity equation for the induced marginals $p_t(x_t) =\mathbb{E}_{p_{0,T}}[ p_{t|0,T}^\phi(x_t)]$, with $\partial p_t = -\nabla \cdot ( p_{t} v_t^{(p_{0,T},\phi)}) $.   We use the notation $v_{t}^{(p_{0,T},\phi)}$ to indicate that the appropriate vector field is induced from our current $p_{0,T}$, $p_{t|0,T}^\phi$, and $v_{t|0,T}^\phi$.  See \cref{app:pfs} \cref{lemma:cont},

    We finally can use the flow matching objective to learn an approximate $v_t^\theta \approx v_{t}^{(p_{0,T},\phi)}$ for our current $p_{0,T}$ and $\phi$ \cite{lipmanflow,albergo2023stochastic,tong2023improving}, where $\text{sg}(\cdot)$ indicates stop-gradient,
  \begin{align}
&\mathcal{L}_{\text{flow}}(\theta) = \mathbb{E}_{\mathcal{U}(t)} \mathbb{E}_{p_{0,T}} \mathbb{E}_{p_{t|0,T}^\phi}\left\| v_t^{\theta}(x_t, t) - {\text{sg}}\left( v_{t|0,T}^\phi(x_0, x_T, t) \right) \right\|^2 
\label{eqn:flow}
\end{align}
    Note that we amortize this optimization over training steps by alternating updates for $v_t^\theta$, $p_{0,T}$, and $\phi$, so that $v_t^\theta$ may not exactly match $v_{t}^{(p_{0,T},\phi)}$.

\paragraph{Coupling Parameterization}
Finally, the  above conditional objectives require samples from a coupling distribution $p_{0,T}$, which should satisfy the boundary constraints $p_0 = \mu_0$, $p_T = \mu_T$ in \cref{eq:cgfm_to_gfm}.
Note that \cref{prop:cgfm_to_gfm} requires optimization over $p_{0,T}$ and our initial data from MD simulation is unpaired ($p_{0,T} = \mu_0 \otimes \mu_T$).  
Thus, optimizing the neural spline on fixed, independent coupling will not lead an optimal solution.   We introduce two ideas for maintaining and updating couplings $p_{0,T}$.

\textit{Minibatch Optimal Transport.}
Following \cite{pooladianneural, tong2023improving}, we can consider solving for (entropic) optimal transport couplings over an empirical batch of samples $(x_0,x_T) \sim \mu_0 \otimes \mu_T$.   However, the choice of cost is crucial to defining the solutions.   While the objective in \cref{eqn:main_obj} corresponds to dynamical OT with the squared error or Euclidean cost for $V(\cdot) =0$, calculating the appropriate dynamical cost with nonzero potential energy is an optimization problem in its own right \citep{pooladianneural}.
To simplify the algorithm, we use OT couplings with the Euclidean cost and introduce resampling based on a replay buffer below.

\textit{Rectified Flow.}
Following \cite{liuflow,shi2024diffusion}, an alternative technique to solve for optimal transport couplings is to iteratively simulate the marginal vector field $v_t^\theta$.  For example, we could sample $p_{0,T}^\theta$ by sampling an initial $x_0 \sim \mu_0$ and simulating to obtain $x_T$.   For the Euclidean cost or a further family of convex costs, iterative simulation and matching $v_t^\theta$ approaches the solution to the optimal transport problem \cite{liu2022rectified}.  However, again, since our GFM problem involves a dynamical cost, we simplify by using the rectified coupling induced from $v_t^\theta$ and introduce corrections via resampling.

\subsection{Resampling and Replay Buffer}
Noting the fact that our coupling parameterizations above do not reflect the desired dynamical cost where $V(\cdot):= U(\cdot)$, we introduce a resampling procedure to reweight paths induced by our coupling $p_{0,T}$ and spline parameterization of $p_{t|0,T}^\phi$ or $x_t^\phi(x_0,x_T)$.
\begin{equation}
\tilde{w}(x|x_0, x_1) = \int_0^T \left(\frac{1}{2} \|v_\theta(x_t)\|^2 + U(x_t) \right) \textnormal{d}t
\label{eqn:weight}
\end{equation}
We aim to assign greater weights to the lower cost paths and smaller weights to the higher cost paths. We normalize the cost as weight $w_i = \exp(-\tilde{w}(x^i))/\sum\limits_{n=1}^{N} \exp(-\tilde{w}(x^i))$.

After sampling lower cost transition paths, we push them into a replay buffer $\mathcal{B}$ by their importance weights. In addition, we refine the learned neural spline by drawing samples from the replay buffer. To do so, we sample paths $\gamma = \{x_t\}_{t=0}^T$ and optimize the following objective to align the neural spline with the paths:
\begin{equation}
\mathcal{L}_{\text{replay}}(\phi) = \mathbb{E}_{\mathcal{U}(t)} \mathbb{E}_{p_{0,T}} \mathbb{E}_\gamma \| x_\phi(x_0, x_T, t) - \gamma_t \|^2 + \| v_\phi(x_0, x_T, t) \|^2 
\label{eqn:replay}
\end{equation}
In addition, we apply a kinetic energy loss function to ensure the smoothness of the learned spline.

\section{Experiment}

\subsection{Experiment Set-up}

\textbf{Müller-Brown Potential.} We first employ the Müller-Brown potential which is a commonly used mathematical model to study transition paths between metastable states. The energy landscape is characterized by three local minima and two saddle points connecting them and can be written down analytically in \Cref{appendix:muller}. To simulate this system, we run the first-order Langenvin dynamics around each local minima.
\begin{equation}
x_{t+1} = x_{t} - \nabla_x V(x_t) \cdot dt + \sqrt{\textnormal{d}t} \cdot \text{diag}(\xi) \cdot \varepsilon, \quad \varepsilon \sim \mathcal{N}(0,1)
\label{eqn:first}
\end{equation}
where we apply Euler discretization of the continuous dynamics. In our experiment, we set $\textnormal{d}t = 10^{-4}$ and $\xi = 5$.

\begin{figure}[H]
    \centering
    \begin{minipage}[t]{0.49\textwidth}
        \centering
        \begin{subfigure}[t]{0.49\textwidth}
            \centering
            \includegraphics[width=\textwidth]{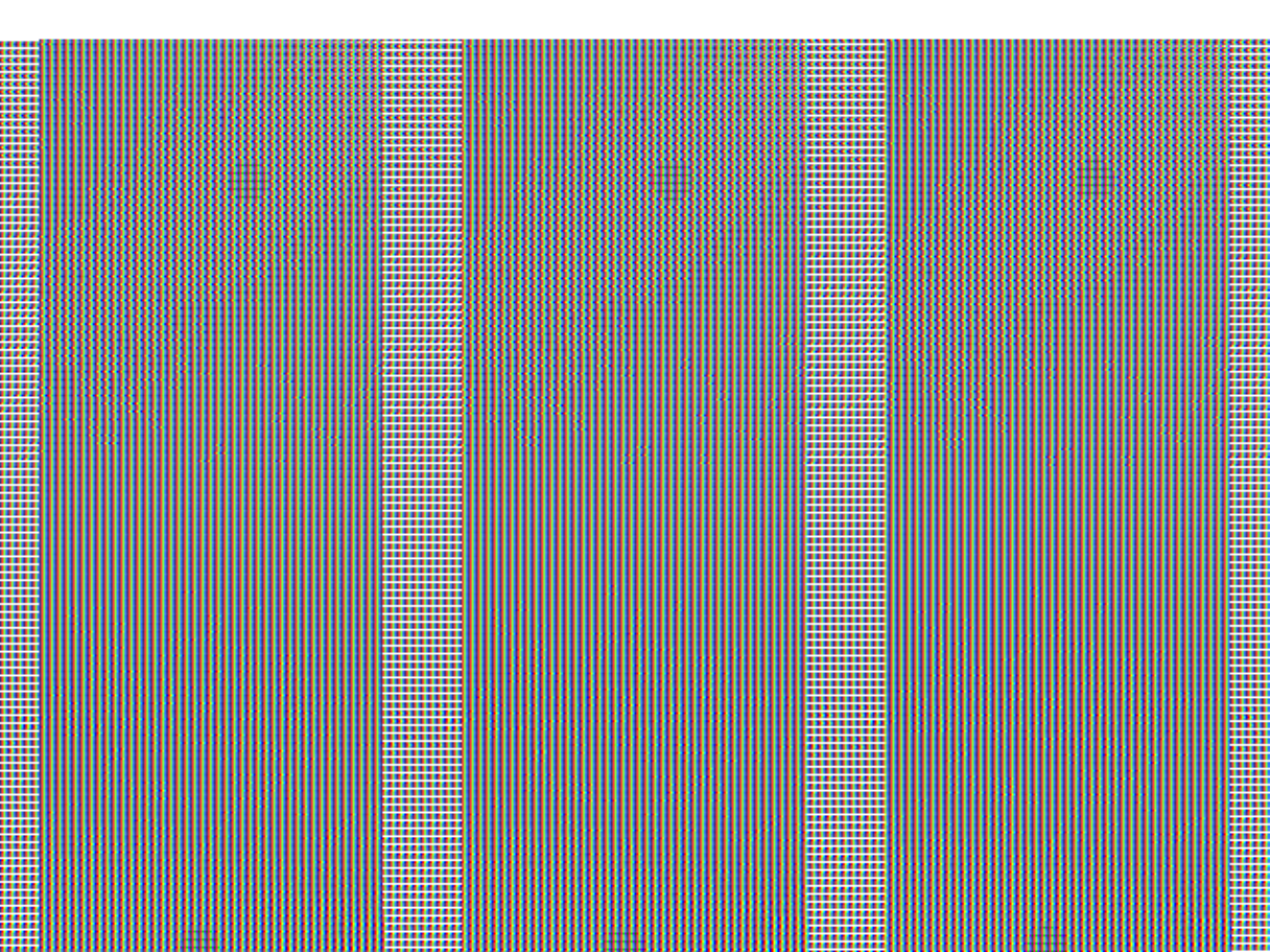}
            \subcaption{Shorter-run dataset}
        \end{subfigure}
        \hfill
        \begin{subfigure}[t]{0.49\textwidth}
            \centering
            \includegraphics[width=\textwidth]{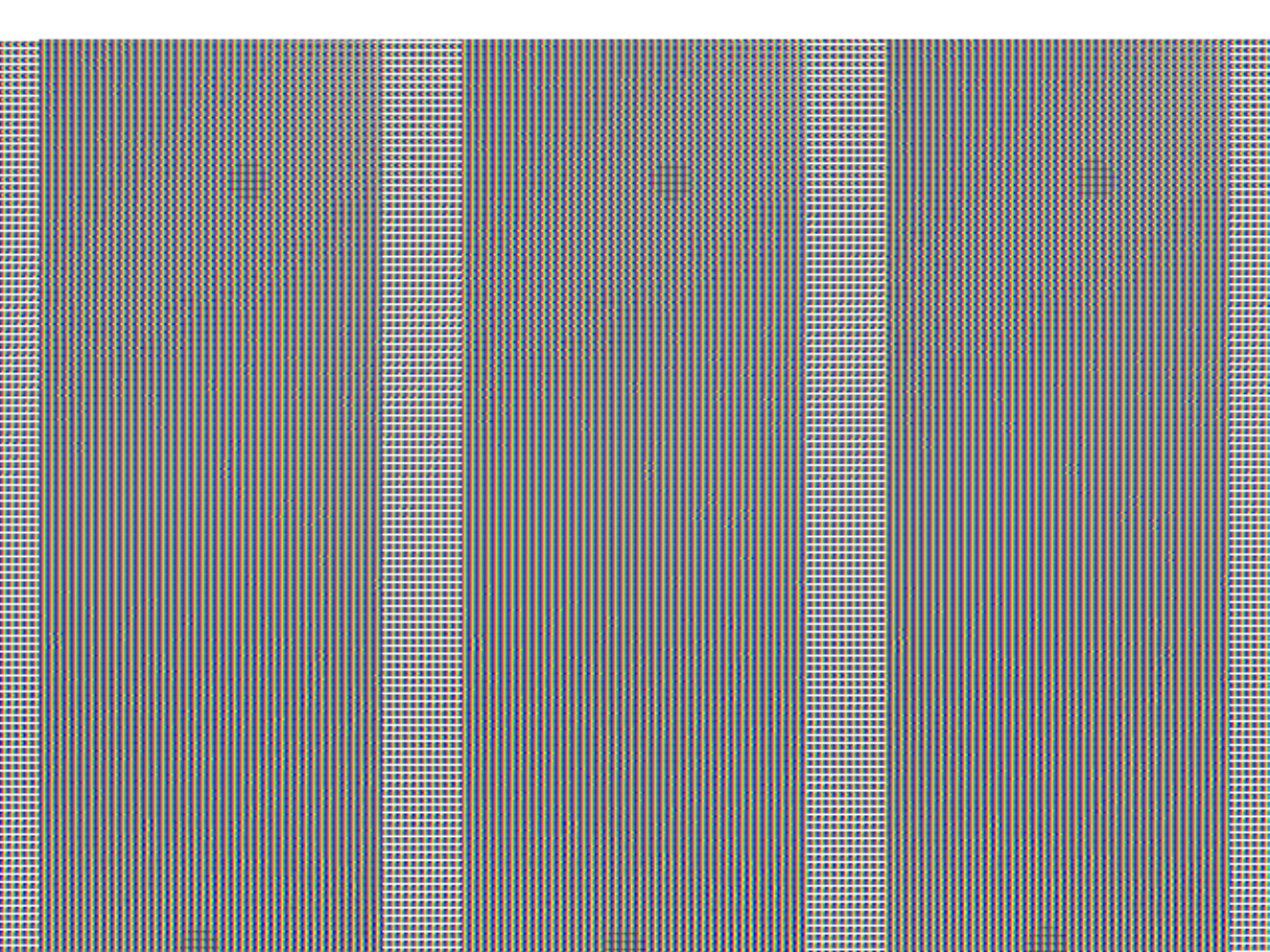}
            \subcaption{Longer-run dataset}
            \label{Sca}
        \end{subfigure}
        \caption{Both datasets in the figure contain 2000 pairs of data points, but randomly sampled from simulation of 4K and 12K steps, respectively.}
        \label{fig:Mdata}
    \end{minipage}
    \hfill
    \begin{minipage}[t]{0.49\textwidth}
        \centering
        \begin{subfigure}[t]{0.49\textwidth}
            \centering
            \includegraphics[width=\textwidth]{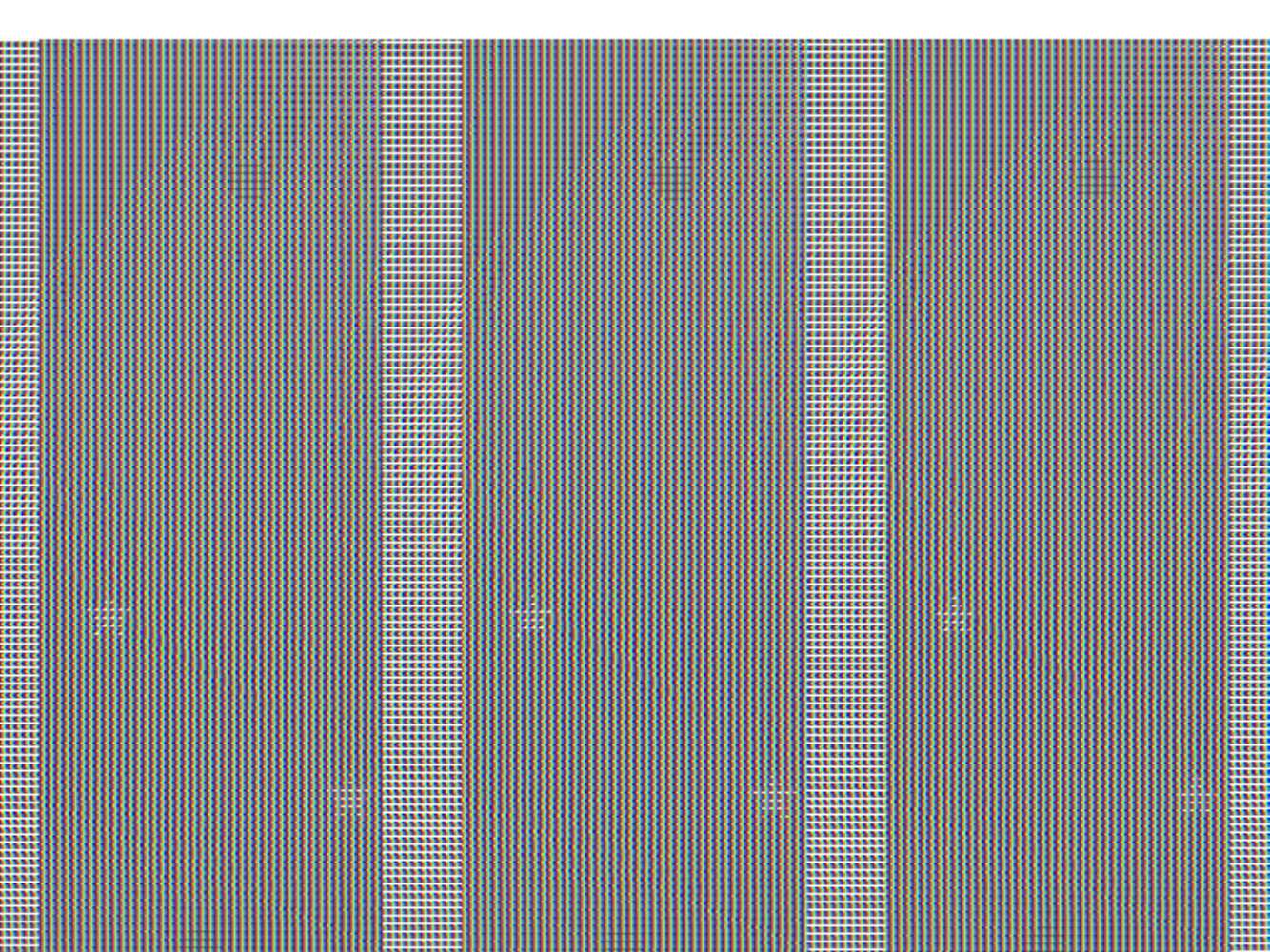}
            \subcaption{Sampled paths (shorter)}
        \end{subfigure}
        \hfill
        \begin{subfigure}[t]{0.49\textwidth}
            \centering
            \includegraphics[width=\textwidth]{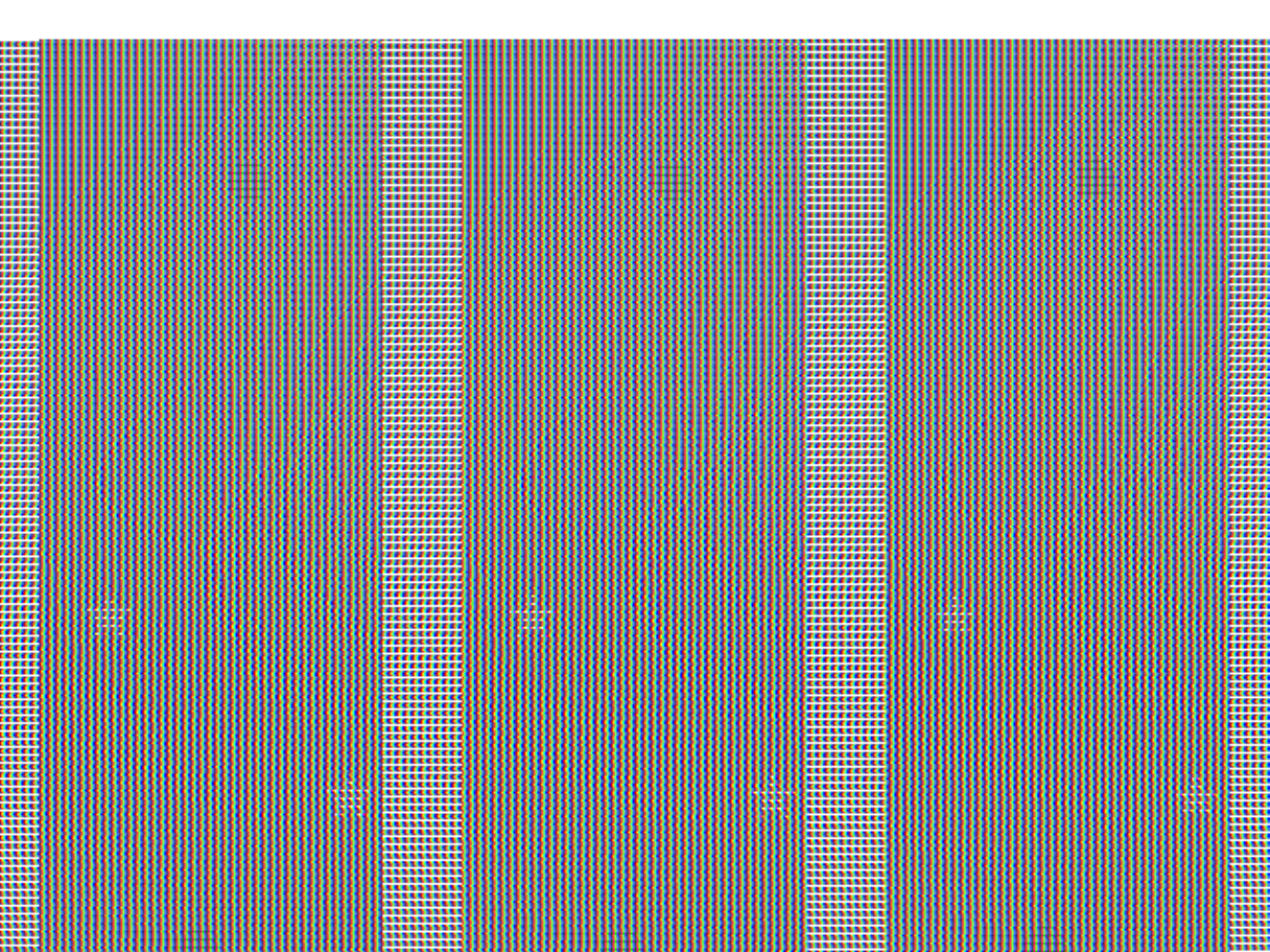}
            \subcaption{Sampled paths (longer)}
        \end{subfigure}
        \caption{Sampled paths from models trained on both the shorter-run and longer-run datasets (Saddle points are stared).}
        \label{fig:Mpath}
    \end{minipage}
\end{figure}

\textbf{Alanine Dipeptide.} We validate our proposed method on a real-world molecular system, Alanine Dipeptide, which contains 22 atoms. The transition between two metastable states (C7eq and C7ax) is characterized by a two-dimensional free energy surface ($\phi$, $\psi$ dihedral angles). The molecular configurations are sampled by conventional molecular dynamics (cMD). MD simulations are performed in vacuum for 1.2 ns with 2 fs time step for each metastable state with the AMBER99SB-ILDN force field. Trajectories and CVs are recorded every 40 fs. Langevin integrator are used to maintain the system temperature to 300 K. HBonds were constrained during simulations.

The free energy surface (FES) is obtained by 800-ns well-tempered metadynamics (WT-MetaD) simulations. Two backbone dihedral angles are chosen as collective variables (CV). Each CV axis is evenly discretized with 25 grid points and the Gaussian bias potential are deposited every 2 ps along the CVs grids. The height and width of the bias potential are set to 0.2 kj/mol and 0.05 radians respectively. The 2D FES is collected and summed from the Gaussian deposits. FES is converged to 0.1 kcal/mol after 800 ns by examining the RMSE of FES between adjacent time stamps. To improve the convergence, we employ the well-tempered version of MetaD with a scaling factor of 8. The simulation is performed via the OpenMM\footnote{https://openmm.org/} software. 

To achieve translation and rotation equivariance, we use the internal coordinate system in addition to the Cartesian coordinate system, more details can be found in \Cref{appendix:invariance}.

\textbf{Hardware.} All experiments are conducted on two NVIDIA GeForce RTX 4090 GPU cards.

\subsection{2D Toy Potential: Muller-Brown Potential}

\begin{table}[H]
\centering
\resizebox{\linewidth}{!}{
\begin{tabular}{l|c|cc|cc}
\toprule
Method & Evaluations & MinMax Energy & Max Energy & Distance $d_1$\tablefootnote{$d_1$ is the shortest distance to the first saddle point (-0.77, 0.64)} & Distance $d_2$\tablefootnote{$d_2$ is the shortest distance to the second saddle point (0.22, 0.3)} \\
\hline
MCMC & 1.03B & -40.21 & -17.80 $\pm$ 14.77 & - & - \\
Doob's Lagrangian & 1.28M & -40.56 & -14.81 $\pm$ 13.73 & - & - \\
\hline
Linear (Random) & N/A & -37.01 & 7.51 $\pm$ 13.0 & 0.62 $\pm$ 0.16 & 0.13 $\pm$ 0.09 \\
Linear (OT) & N/A & -38.98 & 5.22 $\pm$ 17.82 & 0.59 $\pm$ 0.22 & 0.20 $\pm$ 0.14 \\
Ours (shorter) & 4K & -40.67 & -19.97 $\pm$ 12.20 & 0.29 $\pm$ 0.14 & 0.09 $\pm$ 0.07 \\

Ours (longer) & 12K & -40.67 & -27.98 $\pm$ 21.76 & 0.15 $\pm$ 0.11 & 0.18 $\pm$ 0.15 \\
Ours-10K & 20K & -40.67 & -32.71 $\pm$ 16.95 & 0.12 $\pm$ 0.084 & 0.13 $\pm$ 0.12 \\

\bottomrule
\end{tabular}
}
\captionsetup{skip=3mm}
\caption{Müller-Brown potential quantitative evaluation. 1,000 paths sampled from models trained on both datasets in \Cref{fig:Mdata} and one additional 20K simulation steps dataset. For each sampled 1,000 paths, we report the distribution of maximum energy state along the path and minimum energy of the maximum energy states, and the shortest distance points to the two saddle points.
}
\label{tab:2Dtoy}
\end{table}

The Müller-Brown potential has three local minima and two saddle points following the closed-form potential energy surface in \Cref{eqn:MBP}. Starting from the initial and final local minima located at (-0.56, 1.44) and (-0.05, 0.47) in \Cref{fig:Mdata}, we generate the training data by simulating the first-order Langevin Dynamics \Cref{eqn:first} for 4,000, 12,000, and 20,000 steps with $\textnormal{d}t = 10^{-4}$. As shown in \Cref{tab:2Dtoy}, a minimum of 4,000 steps simulation of molecular dynamics around local minima can give us decent paths that are close to the saddle points. As the simulation increases to 12,000 and 20,000 steps, it further improves the performance. In addition, our method requires much fewer data compared to MCMC’s 1.03B and Doob's Lagrangian’s 1.28M simulation steps. It is worth noting that our methods cannot sample from the true transition path distribution as MCMC~\cite{dellago2002transition} and Doob's Lagrangian~\cite{du2024doob}, but we show we can hit around the saddle points quite efficiently.

\subsection{Molecular System: Alanine Dipeptide}

\begin{figure}[h]
    \centering
    \begin{subfigure}[b]{0.23\textwidth}
        \centering
        \includegraphics[width=\textwidth]{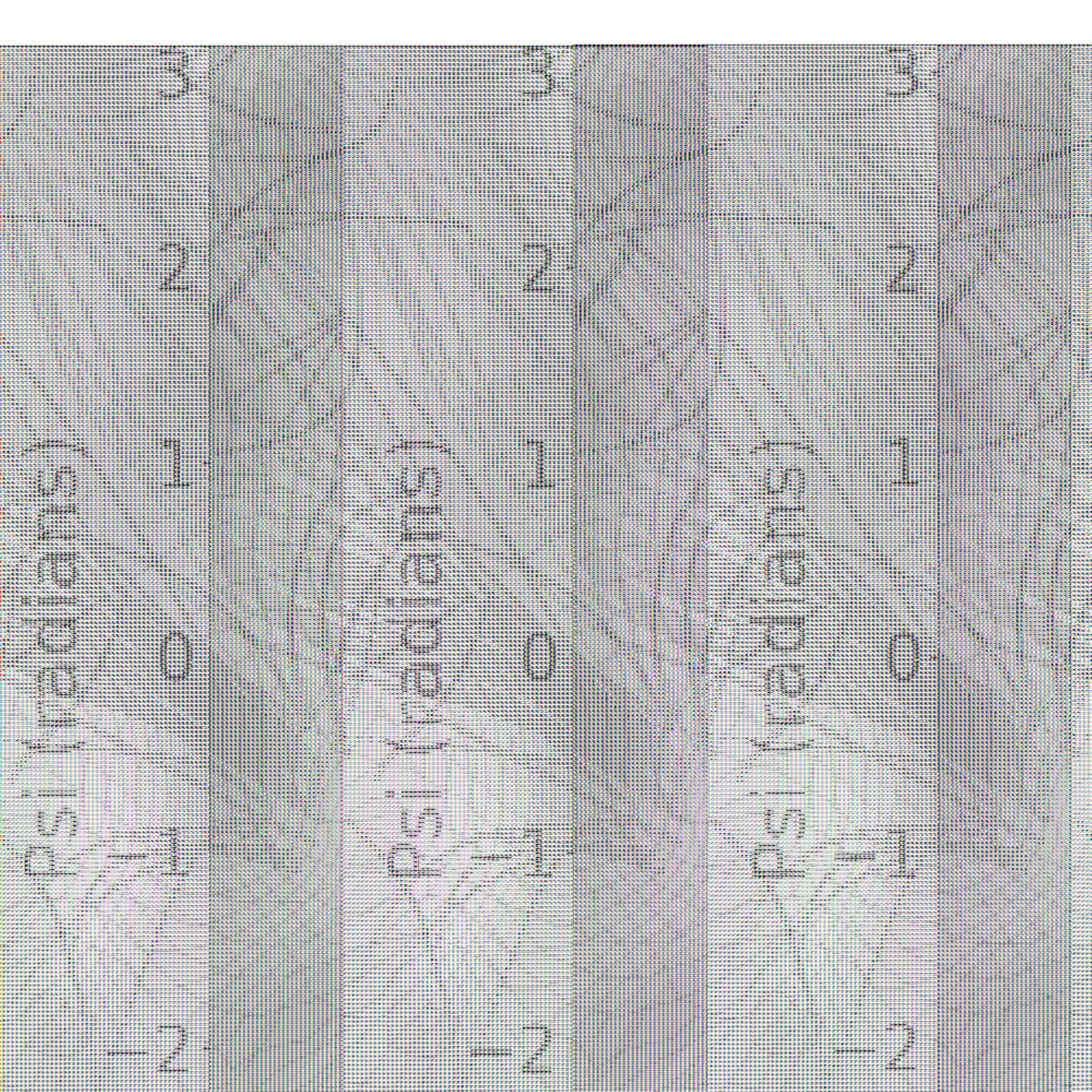}
        \caption{Metric (Cartesian)}
        \label{Cart_spl}
    \end{subfigure}
    \hfill
    \begin{subfigure}[b]{0.23\textwidth}
        \centering
        \includegraphics[width=\textwidth]{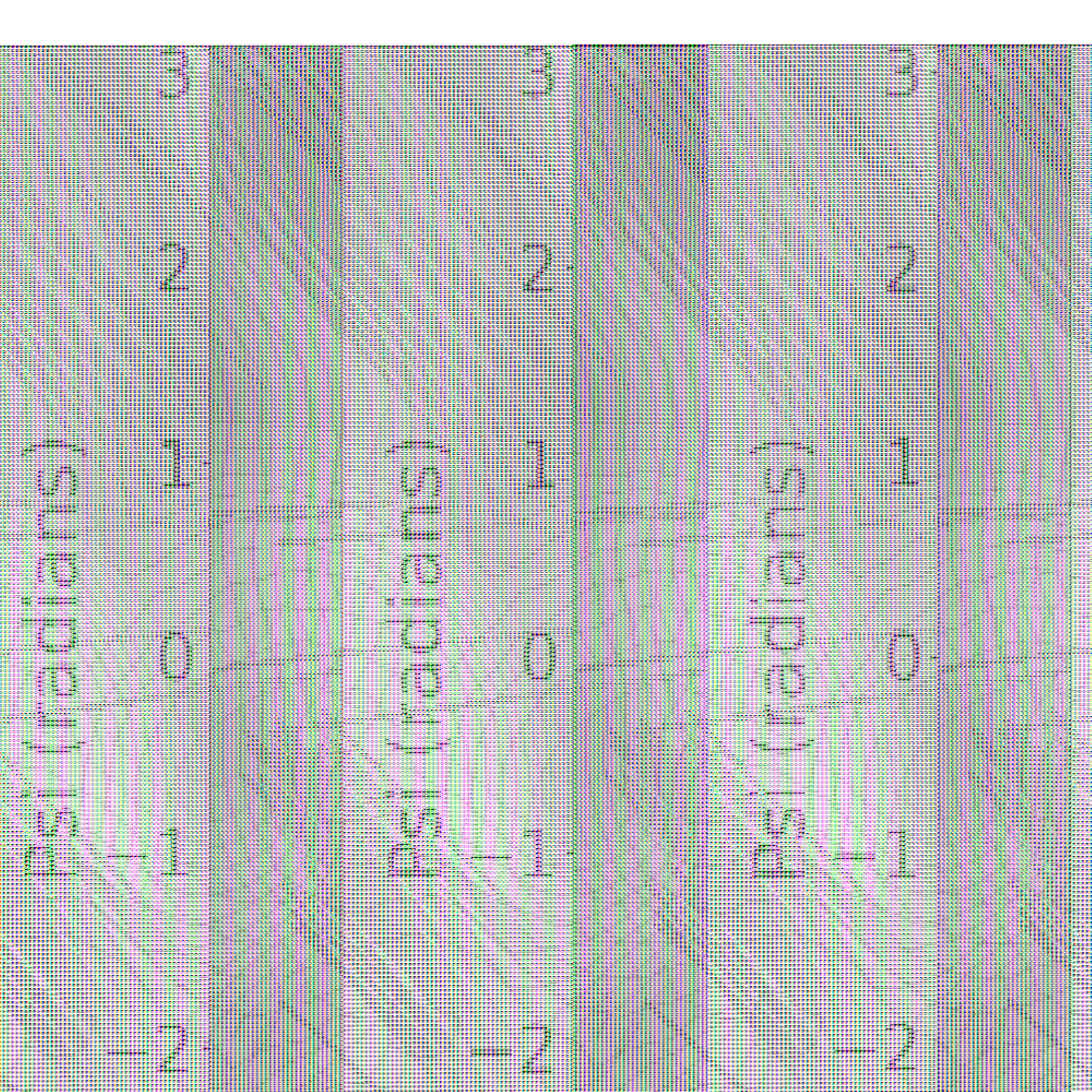}
        \caption{Metric (Internal)}
        \label{Cart_vel}
    \end{subfigure}
    \hfill
    \begin{subfigure}[b]{0.23\textwidth}
        \centering
        \includegraphics[width=\textwidth]{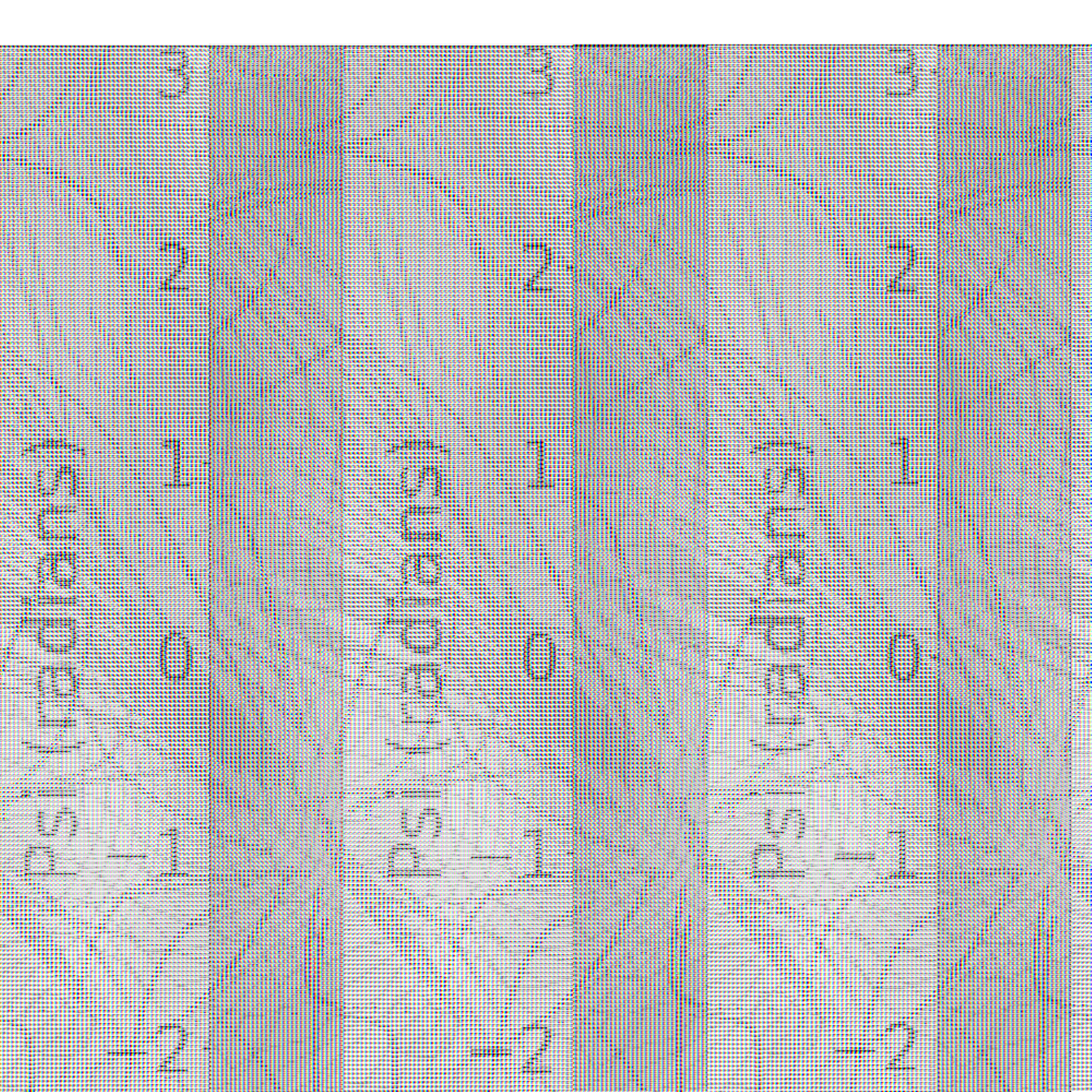}
        \caption{Latent (Cartesian)}
        \label{Inter_spl}
    \end{subfigure}
    \hfill
    \begin{subfigure}[b]{0.23\textwidth}
        \centering
        \includegraphics[width=\textwidth]{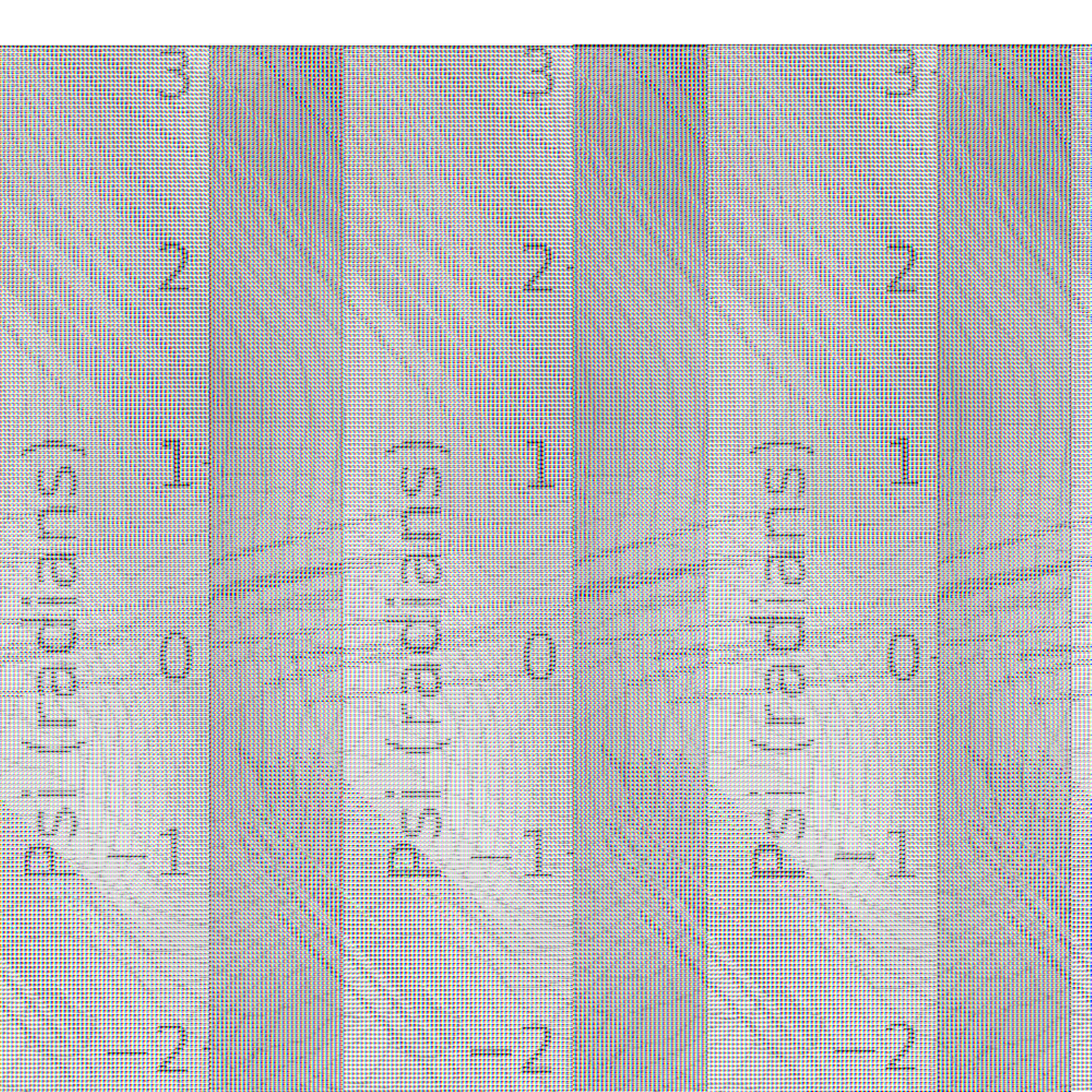}
        \caption{Latent (Internal)}
        \label{Inter_vel}
    \end{subfigure}
    
    \caption{Alanine Dipeptide qualitative evaluation. 50 randomly sampled transition paths are shown for both parameterization in Cartesian and internal coordinate systems with two learned potential energies. Each models are trained over 30,000 data sampled uniformly from a 1.2ns simulation on each metastable states.}
    \label{fig:Coords}
\end{figure}

\vspace{-3mm}

\begin{figure}[h]
    \centering
    \includegraphics[width=1\textwidth]{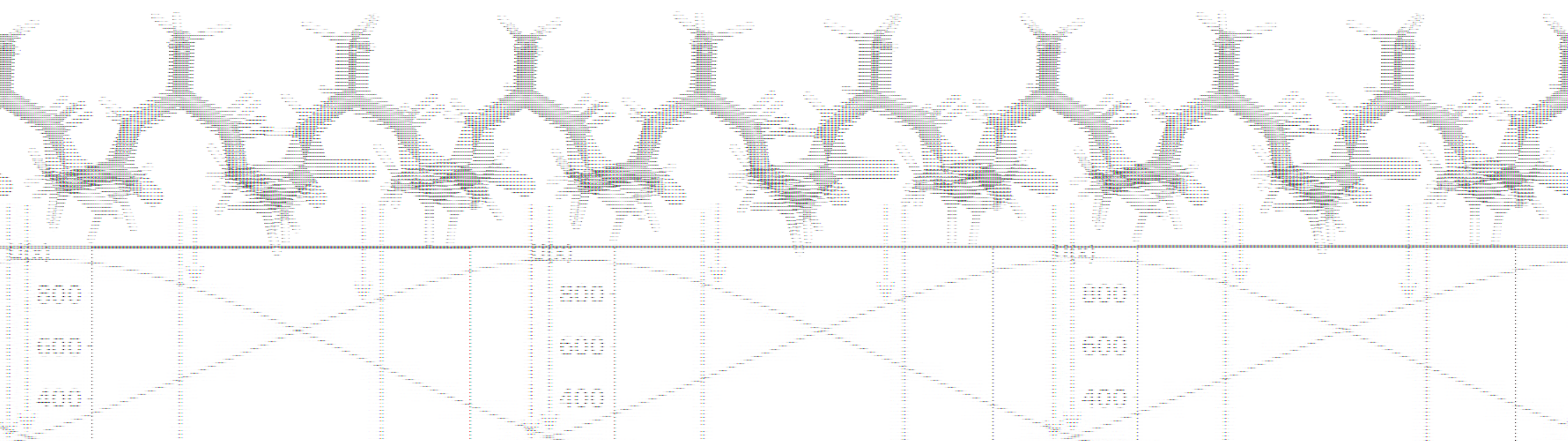}
    \caption{Alanine Dipeptide low-energy path visualization. A total of 500 timesteps from one mestable state to another going through an energy barrier.}
    \label{fig:Cart_energy}
\end{figure}

In \Cref{fig:Coords}, we visualize 50 randomly sampled transition paths from our method, we can observe that most of the paths find the correct collective variable $(\phi, \psi)$ dihedral angles in a much higher (66) dimensional space only by learning potential energy from a short-run molecular dynamics simulation around the local minima. In addition, the two ways of learning potential energy result in similar sampled path distributions. Nevertheless, we find the sampled transition paths with the flipped dihedral angles. In \Cref{fig:Cart_energy}, we demonstrate a qualitative showcase from a low-energy path, transitioning between the two metastable states.

As shown in \Cref{tab:alanine}, the sampled paths by our method is way better than linear interpolating the two metastable states, in both Cartesian and internal coordinate systems. The commonly used image dependent pair potential method~\cite{smidstrup2014improved} (details in \Cref{appendix:iddp}) which interpolates in the pairwise distance matrices space perform better than linear interpolation but does not learn low-enery paths without distribution matching.  In addition, we observe that the distributions of the maximum energy state over a path sampled by both approaches of learning potential energies are also close with the latent interpolation method being way faster. Next, we  show how different sample sizes (5K, 10K, 20K and 30K) affect the quality of the sampled paths. In general, we find the performance improves with more data samples. We also compare different coupling parameterizations and we observe that the OT coupling is way better than the product measure as initial coupling. However, reflow effectively improves over the product coupling and approach the performance of the OT coupling. In the end, we validate the effectiveness of the resampling procedures. We find in general resampling improves the performance but saturates quickly after one iteration. It is also worth noting that resampling can be expensive as it requires calculating importance weights over an entire trajectory. Additional experimental results and visualizations can be found in~\Cref{appendix:add_results}.

\begin{table}[H]
\centering
\resizebox{1\linewidth}{!}{
\begin{tabular}{cccccc|ccc}
\toprule
Potential & Coord & Resampling & Coupling & \# sample pairs & \# evals & MinMax Energy & Max Energy & Run Time (min)\\
\hline
Linear & Cartesian &  N/A & Product & N/A & N/A & 2981.64 & 2.12e+24 $\pm$ 2.12e+25 & N/A \\
Linear & Cartesian & N/A & OT & N/A & N/A & 1044.95 & 4.98e+09 $\pm$ 3.39e+10 & N/A \\
Linear & Internal &  N/A & Product & N/A & N/A & 558.05 & 1.75e+18 $\pm$ 1.75e+19 & N/A \\
Linear & Internal & N/A & OT & N/A & N/A & 580.10 & 3.83e+15 $\pm$ 2.22e+16 & N/A \\

\hline 

IDPP & Cartesian & N/A & Product & 100 & 200 & 980.65 & 84789.26 $\pm$ 718872.44 & N/A \\
IDPP & Cartesian & N/A & OT & 100 & 200 & 935.02 & 10886.34 $\pm$ 39438.47 & N/A \\
IDPP & Cartesian & N/A & Product & 2K & 4K & 835.54 & 579422.25 $\pm$ 12129757 & N/A \\
IDPP & Cartesian & N/A & OT & 2K & 4K & 815.57 & 725867.81 $\pm$ 15551798 & N/A \\

\hline
MCMC* & Cartesian & N/A & N/A & N/A & 25.82M & 28.67 & 1,212.81 $\pm$ 19,444.46 & N/A  \\
MCMC & Cartesian & N/A & N/A & N/A & 1.29B* & 60.52 & 288.46 $\pm$ 128.31 & N/A\\

\hline
Metric & Cartesian & N/A & OT & 30K & 1.2M & 678.13 & 987.02 $\pm$ 210.77 & 100 \\
Metric & Internal & N/A & OT & 30K & 1.2M & 526.26 & 698.83 $\pm$ 152.48 & 112\\
Latent & Cartesian & N/A & OT & 30K & 1.2M & 600.09 & 949.69 $\pm$ 235.92 & 18\\
Latent & Internal & N/A & OT & 30K & 1.2M & 525.68 & 971.97 $\pm$ 1673.66 & 29\\

\midrule
Metric & Cartesian & N/A & OT & 5K & 1.2M & 862.38 & 1581.29 $\pm$ 487.10 & N/A\\
Metric & Cartesian & N/A & OT & 10K & 1.2M & 705.65 & 1058.79 $\pm$ 275.15 & N/A\\
Metric & Cartesian & N/A & OT & 20K & 1.2M & 588.96 & 979.26 $\pm$ 267.62 & N/A\\
Metric & Cartesian & N/A & OT & 30K & 1.2M & 664.36 & 977.28 $\pm$ 215.98 & N/A\\
\midrule

Metric & Cartesian & N/A & Product & 30K & 1.2M & 1312.88 & 2903.58 $\pm$ 1221.16 & N/A\\
\midrule
Metric & Cartesian & N/A & Reflow-1 & 30K & 1.2M & 1023.45 & 2316.64 $\pm$ 829.71 & N/A\\
Metric & Cartesian & N/A & Reflow-2 & 30K & 1.2M & 898.30 & 1960.67 $\pm$ 684.71 & N/A\\
Metric & Cartesian & N/A & Reflow-3 & 30K & 1.2M & 944.93 & 1922.53 $\pm$ 629.63 & N/A\\

\midrule 

Metric & Cartesian & Resample-1 & OT & 30K & 16.2M & 538.86 & 841.33 $\pm$ 176.78 & N/A \\
Metric & Cartesian & Resample-2 & OT & 30K & 31.2M & 539.78 & 848.56 $\pm$ 167.92 & N/A \\
Metric & Cartesian & Resample-3 & OT & 30K & 46.2M & 603.21 & 883.41 $\pm$ 189.68 & N/A \\

\bottomrule
\end{tabular}
}
\captionsetup{skip=3mm}
\caption{Alanine Dipeptide quantitative results. Top 100-weighted paths are evaluated for each setup with different ways of learned potential energies, coordinate systems, coupling parameterizations and resampling steps. In addition, we report the training time for learning potential energy in two different ways. (MCMC results are taken from~\cite{du2024doob}, * indicates variable-length MCMC.)}
\label{tab:alanine}
\end{table}

\section{Conclusion, Limitation and Future Work}

In this paper, we propose to simulate transition dynamics in molecular systems by inferring dynamics from the local dynamics around one metastable state to another. We propose a generalized flow matching algorithm that additionally optimizes a learned potential energy of the system. In our scenario, the potential energy is obtained through metric learning or latent space interpolation. We further employ an importance sampling technique and replay buffer to improve the convergence of the method. Experimental results demonstrate that the proposed method is capable of finding good transition paths, and good approximations of transition states with a significantly small number of energy evaluations.

One main limitation of the current framework is that the sampled path distribution could be far from the transition path distribution. One future direction is to use the learned path distribution as a proposal distribution for sampling-based approaches, e.g.~\cite{holdijk2024stochastic}; another direction is to use the current framework as an initialization for transition state search methods.

\bibliographystyle{apalike} 
\bibliography{refs}

\clearpage
\appendix
\section{Additional Details on Metric Learning}
\label{appendix:metric_learning}

Following~\cite{kapusniak2024metric}, given a dataset of data samples $\{x_i\}_{i=1}^N$, we learn a metric from the Radial Basis Function (RBF) such that $G(x) = \left( \text{diag}(h(x)) + \epsilon I \right)^{-1}$, where $\epsilon > 0$ and the function $h(x)$ is defined as:
\begin{equation}
h(x) = \sum_{k=1}^{K} \omega_{k} \exp \left( - \frac{\lambda_{k}}{2} \| x - \bar{x}_k \|^2 \right)
\label{eqn:h}
\end{equation}
where $\bar{x}_k$ is the centroid of the $K$ clusters found by $k$-means clustering algorithm, $w_k$ is learned weights and $\lambda_k$ is a bandwidth associated with each cluster. The bandwidth around each cluster $C_k$ is defined as follows:
\begin{equation}
\label{eqn:bandwith}
\lambda_{k} = \frac{1}{2} \left( \frac{\kappa}{|C_{k}|} \sum_{x \in C_{k}} \|x - \bar{x}_k\|^2 \right)^{-2}
\end{equation}

where $C_{k}$ is the $k$-th cluster, $\bar{x}_k$ is the centriod of the cluster,  and $\kappa$ is a hyperparameter that controls the decay rate of the weight for cluster of different shapes. We then use the following objective function to learn the weights $w_{k}$ for each cluster in the dataset $\mathcal{D}$:
\begin{equation}
\mathcal{L}(\{\omega_{k}\}_{k=0}^K) = \sum_{x \in \mathcal{D}} \left( 1 - h(x) \right)^2 = \sum_{x \in \mathcal{D}} \left( 1 - \sum_{k=1}^{K} \omega_{k} \exp \left( - \frac{\lambda_{k}}{2} \| x - \bar{x}_k \|^2 \right) \right)^2
\end{equation}

\section{Translational and Rotational Invariance} 
\label{appendix:invariance}

A function $f$ is $G$-equivariant if $\forall x\in \mathbb{R}^d$, $g \in G$, we have $f \circ g(x) = g \circ f(x)$. A special case of equivariant function is the invariant function such that a function $f$ is $G$-invariant if $\forall x\in \mathbb{R}^d$, $g \in G$, we have $f \circ g(x) = f(x)$.
The kinetic and potential energy of each state of a molecular system is invariant to the Euclidean group while the Cartesian coordinate and vector field are equivariant to the Euclidean group. 
We represent the molecular system in internal coordinates following~\cite{noe2019boltzmann} instead of Cartesian coordinates which are constructed by invariant quantities, i.e. distances, angles and dihedral angles. In this scenario, we remove the unnecessary degrees of freedom, i.e. translations, rotations and reflections.

\section{Proofs}\label{app:pfs}

\subsection{Conditional Probability Paths}\label{app:prob_path_pf}
Toward deriving our conditional objective, we begin by showing the following lemma \citep{tong2023improving}.

\begin{lemma} \label{lemma:cont}
For a given joint distribution $p_{0,T}$ and conditional distribution $p_{t|0,T}$ such that $p_t(x_t) = \mathbb{E}_{p_{0,T}}[ p_{t|0,T}(x_t)]$,
and a conditional vector field $v_{t|0,T}$ which satisfies the continuity equation,
 \begin{align}
 \frac{\partial}{\partial t} p_{t|0,T}(x_t) = - \nabla \cdot \left( p_{t|0,T}(x_t) ~ v_{t|0,T}(x_t) \right), \label{eq:cond_cont_app2}
\end{align}
then, under mild conditions, the vector field
    \begin{align}
        v_{t}(x_t) = \mathbb{E}_{p_{0,T}(x_0,x_T)}\left[ \frac{p_{t|0,T}(x_t)}{p_t(x_t)}  v_{t|0,T}(x_t)\right]
    \end{align}
    satisfies the continuity equation 
 \begin{align}
     \partial_t p_t(x_t) = \mathbb{E}_{p_{0,T}}\left[ \partial_t p_{t|0,T}(x_t) v_{t|0,T}(x_t) \right] = -\nabla \cdot \big( p_t(x_t) v_t(x_t) \big)
 \end{align}
\end{lemma}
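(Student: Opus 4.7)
The plan is to verify the continuity equation for $p_t$ and $v_t$ by direct computation, starting from the marginalization identity $p_t(x_t) = \int p_{0,T}(x_0,x_T)\, p_{t|0,T}(x_t)\, dx_0\, dx_T$ and pushing a time derivative through the integral. The entire argument is essentially a rewriting of the conditional continuity equation under the $p_{0,T}$ expectation, so no new ingredients beyond Fubini-type interchanges are needed.

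Concretely, I would first differentiate the marginalization identity in $t$ and, under the stated mild regularity (enough decay of $p_{t|0,T}$ and integrability of $\partial_t p_{t|0,T}$ in $(x_0,x_T)$), exchange $\partial_t$ with the integral to obtain $\partial_t p_t(x_t) = \mathbb{E}_{p_{0,T}}[\partial_t p_{t|0,T}(x_t)]$. Then I substitute the conditional continuity equation \eqref{eq:cond_cont_app2} inside the expectation to get
\begin{equation*}
\partial_t p_t(x_t) = -\,\mathbb{E}_{p_{0,T}(x_0,x_T)}\Bigl[\nabla_{x_t}\!\cdot\bigl(p_{t|0,T}(x_t)\, v_{t|0,T}(x_t)\bigr)\Bigr].
\end{equation*}
Next I pull the divergence outside the expectation (again by Fubini together with sufficient decay of the flux $p_{t|0,T}\, v_{t|0,T}$ at infinity so the spatial derivative commutes with the $(x_0,x_T)$ integral), yielding $\partial_t p_t(x_t) = -\nabla \cdot \mathbb{E}_{p_{0,T}}[p_{t|0,T}(x_t)\, v_{t|0,T}(x_t)]$.

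The final step is the algebraic trick of multiplying and dividing by $p_t(x_t)$ inside the expectation to reveal the claimed form of $v_t$:
\begin{equation*}
\mathbb{E}_{p_{0,T}}\bigl[p_{t|0,T}(x_t)\, v_{t|0,T}(x_t)\bigr] = p_t(x_t)\, \mathbb{E}_{p_{0,T}}\!\left[\frac{p_{t|0,T}(x_t)}{p_t(x_t)}\, v_{t|0,T}(x_t)\right] = p_t(x_t)\, v_t(x_t),
\end{equation*}
which gives $\partial_t p_t = -\nabla \cdot (p_t\, v_t)$ as required. One should note that $v_t(x_t)$ is defined only where $p_t(x_t)>0$, but this causes no issue since the product $p_t\, v_t$ is well defined everywhere via the first expression.

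The only genuinely delicate part is justifying the two interchanges (time derivative with the $p_{0,T}$ integral, and spatial divergence with the $p_{0,T}$ integral); these are standard under the usual flow-matching regularity assumptions (e.g.\ $p_{t|0,T}$ continuously differentiable in $t$, $v_{t|0,T}$ locally Lipschitz with integrable flux, and decay at infinity so that boundary terms vanish), which is what the phrase ``mild conditions'' encodes. Everything else is a one-line computation, so I do not anticipate any real obstacle beyond carefully stating these hypotheses.
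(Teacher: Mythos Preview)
Your proposal is correct and follows essentially the same route as the paper: interchange $\partial_t$ with the $p_{0,T}$ expectation via Leibniz, substitute the conditional continuity equation, swap divergence and expectation by linearity, and then identify $\mathbb{E}_{p_{0,T}}[p_{t|0,T}\,v_{t|0,T}] = p_t\,v_t$. The paper additionally notes the equivalent posterior form $v_t(x_t)=\mathbb{E}_{p_{0,T|t}}[v_{t|0,T}(x_t)]$ before verifying the continuity equation, but the computational steps are identical to yours.
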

\begin{proof}
Assuming the Leibniz rule holds, we decompose
\begin{align}
    \frac{\partial}{\partial t} p_t(x_t) = \frac{\partial}{\partial t}  \mathbb{E}_{p_{0,T}(x_0,x_T)}\Big[  p_{t|0,T}(x_t|x_0,x_T)\Big] = \mathbb{E}_{p_{0,T}(x_0,x_T)}\left[  \frac{\partial}{\partial t} p_{t|0,T}(x_t|x_0,x_T)\right] \label{eq:expected_continuity}
\end{align}
We will introduce a vector field $v_{t|0,T}$ which is constrained to satisfy the continuity equation for $p_{t|0,T}$.  Omitting explicit conditioning in the arguments, we write
\begin{align}
 \frac{\partial}{\partial t} p_{t|0,T}(x_t) = - \nabla \cdot \left( p_{t|0,T}(x_t) ~ v_{t|0,T}(x_t) \right).  \label{eq:cond_cont_app}
\end{align}
Finally, we would like to relate the conditional vector field $v_{t|0,T}$ to the marginal vector field $v_t$ in \cref{eqn:main_obj_app}.  
Following \cite{tong2023improving} Thm 3.1, we confirm that 
\begin{align}
v_t(x_t) = \mathbb{E}_{p_{0,T|t}(x_0,x_T|x_t)}\left[ v_{t|0,T}(x_t)\right] = \frac{1}{p_t(x_t)} \mathbb{E}_{p_{0,T}(x_0,x_T)}\left[ p_{t|0,T}(x_t) v_{t|0,T}(x_t) \right] 
\label{eq:cond_vf}
\end{align}
satisfies the continuity equation relationships in  \cref{eq:expected_continuity}-\cref{eq:cond_cont_app}, namely
\begin{align}
   \frac{\partial}{\partial t} p_t(x_t) =  \mathbb{E}_{p_{0,T}(x_0,x_T)}\left[  \frac{\partial}{\partial t} p_{t|0,T}(x_t)\right] &=  \mathbb{E}_{p_{0,T}(x_0,x_T)}\left[  - \nabla \cdot \left( p_{t|0,T}(x_t) ~ v_{t|0,T}(x_t) \right) \right]  \\
   &=  - \nabla \cdot \mathbb{E}_{p_{0,T}(x_0,x_T)}\left[   p_{t|0,T}(x_t) ~ v_{t|0,T}(x_t)  \right] \\
    &= - \nabla \cdot \left( p_t(x_t) v_t(x_t)  \right) = \frac{\partial}{\partial t} p_t(x_t)
\end{align}
where, in the second line, we use the linearity of divergence operator and expectation to swap their order and, in the third line, we substitute the identity in \cref{eq:cond_vf} to recover $\frac{\partial}{\partial t} p_t(x_t)$.
\end{proof}

\subsection{Proof of Conditional Objective}\label{app:conditional_pf}

\begin{proof} 
We begin from the Fokker-Planck equation formulation of the GFM objective 
\begin{align}
\begin{split}
&\mathcal{L}_{\text{GFM}} = ~ \min_{v_t, p_t} \int_0^T \int \left( \frac{1}{2}\| v_t(x_t)\|^2 + V_t(x_t) \right) \; p_t(x_t)  \textnormal{d}x \textnormal{d}t\\
&~ \text{s.t.}\quad \partial_t p_t(x_t)= -\nabla \cdot \big( p_t(x_t) v_t(x_t) \big), \, \, p_0 = \mu_0,\,  p_T = \mu_T
\label{eqn:main_obj_app}
\end{split}
\end{align}

Assume that a path of marginals $p_t(x_t)$ can be decomposed as $p_t(x_t) = \mathbb{E}_{p_{0,T}}[ p_{t|0,T}(x_t)]$, where we assume the joint distribution $p_{0,T} \in \Pi(\mu_0,\mu_T)$ satisfies the endpoint constraints and the conditional distribution $p_{t|0,T}$ is suitably smooth (e.g. absolute continuity) such that there exists $v_{t|0,T}$ satisfying $\partial_t p_{t|0,T} = - \nabla \cdot \left( p_{t|0,T} v_{t|0,T}\right)$.
Under these assumptions, we show that the objective in \cref{eqn:main_obj_app}, as a function of $p_t, v_t$, can be upper bounded in terms of the conditional  $p_{t|0,T}, v_{t|0,T}$.
\footnote{We assume the decomposition into suitable $p_t(x_t) = \mathbb{E}_{p_{0,T}}[ p_{t|0,T}(x_t)]$ does not change the value of the optimization.  Otherwise, a further bound would be induced.} 

For any $p_t, v_t$ satisfying the above, we begin by rewriting the continuity equation constraint
\begin{align}
\partial_t p_t(x_t)= \mathbb{E}_{p_{0,T}(x_0,x_T)}\left[\partial_t p_{t|0,T}(x_t) \right] = \mathbb{E}_{p_{0,T}(x_0,x_T)}\left[  - \nabla \cdot \left( p_{t|0,T} v_{t|0,T}\right) \right]\label{eq:constr_cond}
\end{align}
where assumed $p_{0,T}$ satisfies the endpoint marginal constraints.  

Turning to the objective, we write the expectation using $ \mathbb{E}_{p_t}[V_t(x_t)] = \mathbb{E}_{p_{0,T}}\mathbb{E}_{p_{t|0,T}}[V_t(x_t)]$,
\begin{align}
 &\int_0^T \int \left( \frac{1}{2}\| v_t(x_t)\|^2  + V_t(x_t) \right) \; p_t(x_t)  \textnormal{d}x \textnormal{d}t \\
 &= \int_0^T \int  \frac{1}{2}\| v_t(x_t)\|^2 p_t(x_t) dx_t dt  +  \mathbb{E}_{p_{0,T}(x_0,x_T)}\left[ \int_0^T \int V_t(x_t) p_{t|0,T}(x_t) dx_t dt \right] \\
\intertext{Finally, we would like to express the $v_t$ term in terms of $v_{t|0,T}$ to match the constraint in \labelcref{eq:constr_cond}.  Using the identity \cref{eq:cond_vf} and then Jensen's inequality, we write}
   &\small 
   = \int_0^T \int  \frac{1}{2}\left\| \mathbb{E}_{p_{0,T|t}}\left[  v_{t|0,T}(x_t) \right] \right\|^2 p_t(x_t) dx_t dt  +  \mathbb{E}_{p_{0,T}}\left[ \int_0^T \int V_t(x_t) p_{t|0,T}(x_t) dx_t dt \right] \\
   \normalsize
   &\leq \int_0^T \int  \frac{1}{2} \mathbb{E}_{p_{0,T|t}} \left\|   v_{t|0,T}(x_t)  \right\|^2 p_t(x_t) dx_t dt  +  \mathbb{E}_{p_{0,T}}\left[ \int_0^T \int V_t(x_t) p_{t|0,T}(x_t) dx_t dt \right] \label{eq:jensens} \\
   &= \mathbb{E}_{p_{0,T}}\left[ \int_0^T 
 \int \left( \frac{1}{2}  \left\|   v_{t|0,T}(x_t)  \right\|^2 + V_t(x_t) \right)  p_{t|0,T}(x_t) ~ dx_t 
 dt\right] \label{eq:obj_pf_app}
\end{align}
where, in the last line, we use the fact that $p_{0,T|t}p_t = p{0,t,T} = p_{0,T}p_{t|0,T}$ for any $x_0,x_t,x_T$.

We have thus shown that any feasible, factorizable $p_t, v_t$ corresponds to an upper bound on the objective in terms of a coupling $p_{0,T}$ and conditional $p_{t|0,T}, v_{t|0,T}$.  
To formalize our conclusions, we define the conditional objective for particular $x_0, x_T \sim p_{0,T}$
\begin{align}
    \mathcal{L}_{\text{cGFM}}(x_0,x_T) := &~  \min_{p_{t|0,T}, v_{t|0,T}} \int_0^T 
\int \left( 
\frac{1}{2}\| v_{t|0,T}^\phi(x_t)\|^2 + V_t(x_t) 
\right) 
\; p_{t|0,T}(x_t|x_0,x_T) \textnormal{d}x_t
\textnormal{d}t\\
&\text{s.t.}\quad  \partial_t ~ p_{t|0,T}(x_t) = -\nabla \cdot \big( p_{t|0,T}(x_t) v_{t|0,T}^\phi(x_t) \big), \quad 
p_{0|0,T} = \delta_{x_0}, ~~ p_{T|0,T} = \delta_{x_T} \nonumber
\end{align}
where we abbreviate the boundary condition $p_{0|0,T}(x) = \delta(x-x_0)$.

Finally, we reintroduce the outer expectation over $p_{0,T}$ and consider optimizing over
$p_{0,T} \in \Pi(\mu_0,\mu_T)$ (such that $p_0 = \mu_0, p_T = \mu_T$).  We can thus conclude that optimizing the conditional objective upper bounds $\mathcal{L}_{\text{GFM}}$, due to our use of decomposable $p_t$ and Jensen's inequality in \eqref{eq:jensens}
\begin{align}
    \mathcal{L}_{\text{GFM}} \leq \min \limits_{p_{0,T}} \mathbb{E}_{p_{0,T}}\Big[ \mathcal{L}_{\text{cGFM}}(x_0,x_T)\Big]
\quad \text{s.t.} \quad p_0 = \mu_0, ~ p_T = \mu_T
\end{align}
\end{proof}

\section{Image Dependent Pair Potential (IDPP) Method}
\label{appendix:iddp}

In addition to linear interpolation, there is another class of methods based on interpolating the pairwise distance matrices of the two states to find a good initialization for transition paths. Similarly, we parameterize a neural spline $x^\phi$ to match the pairwise distance matrices given by the linearly interpolated pairwise distance matrices $r_t$ with the following objective:
\begin{align}
    &r_t = (1-t) d(x_0) + t d(x_T),\;\; (x_0, x_T) \sim p_{0,T}, \; t \in [0, T]
    \\
    &\mathcal{L}_{\text{IDPP}} = \sum_{i,j=1}^{N} \left(d(x^{\phi}_t) - r_t \right)^2
\end{align}
where $d: \mathbb{R}^{N \times 3} \rightarrow \mathbb{R}^{N \times N}$ takes a Cartesian coordinate and returns the pairwise distance. We set the learning rate as 0.01 and train for 500 epochs in the experiment.

\section{Additional Experimental Details}
\label{appendix:add_results}

\subsection{Müller-Brown Potential}
\label{appendix:muller}

The Müller-Brown potential can be written down analytically:
\begin{align}
V(x, y) = & - 200 \exp \left( -(x - 1)^2 - 10y^2 \right) \notag \\
        & - 100 \exp \left( - x^2 - 10(y - 0.5)^2 \right) \notag \\
        & - 170 \exp \left( -6.5(x + 0.5)^2 + 11(x + 0.5)(y - 1.5) - 6.5(y - 1.5)^2 \right) \notag \\
        & + 15 \exp \left( 0.7(x + 1)^2 + 0.6(x + 1)(y - 1) + 0.7(y - 1)^2 \right)
\label{eqn:MBP}
\end{align}

The potential energy function, neural spline network, and velocity network are trained for 100 epochs, respectively with a batch size of 256. We use the Adam optimizer for all training with $10^{-2}$, $10^{-5}$ and $10^{-3}$, respectively. For each neural network, we use a three-layer MLP, with 128 hidden units per layer, and the SELU activation function. The clustering bandwidth for metric learning in \Cref{eqn:bandwith} is set to $\kappa = 1.5$. The number of clusters $K$ is set to 100 in \Cref{eqn:h}.

\subsection{Alanine Dipeptide}

The potential energy function, neural spline network, and velocity network are trained for 400, 100 and 100 epochs, respectively with a batch size of 512. We use the Adam optimizer for all training with $10^{-2}$, $10^{-5}$ and $10^{-3}$, respectively. For each neural network, we use a three-layer MLP, with 128 hidden units per layer, and the SELU activation function. The clustering bandwidth for metric learning in \Cref{eqn:bandwith} is set to $\kappa = 1.5$. The number of clusters $K$ is set to 150 in \Cref{eqn:h}. And in metric formula, we set $\epsilon$ to 0.001. To obtain the latent space for interpolation, we train a 32-dimension latent space VAE with a three-layer MLP encoder with 64 hidden units and anothor three-layer MLP decoder with 64 hidden units. Note the learning rate is $10^{-3}$ instead of $10^{-2}$. For all experiments, we use the scale of $0.1$ Angstrom as unit length.

\begin{figure}[]
    \centering
    \begin{minipage}{0.6\textwidth}
        \centering
        \begin{subfigure}[b]{0.45\textwidth}
            \centering
            \includegraphics[width=\textwidth]{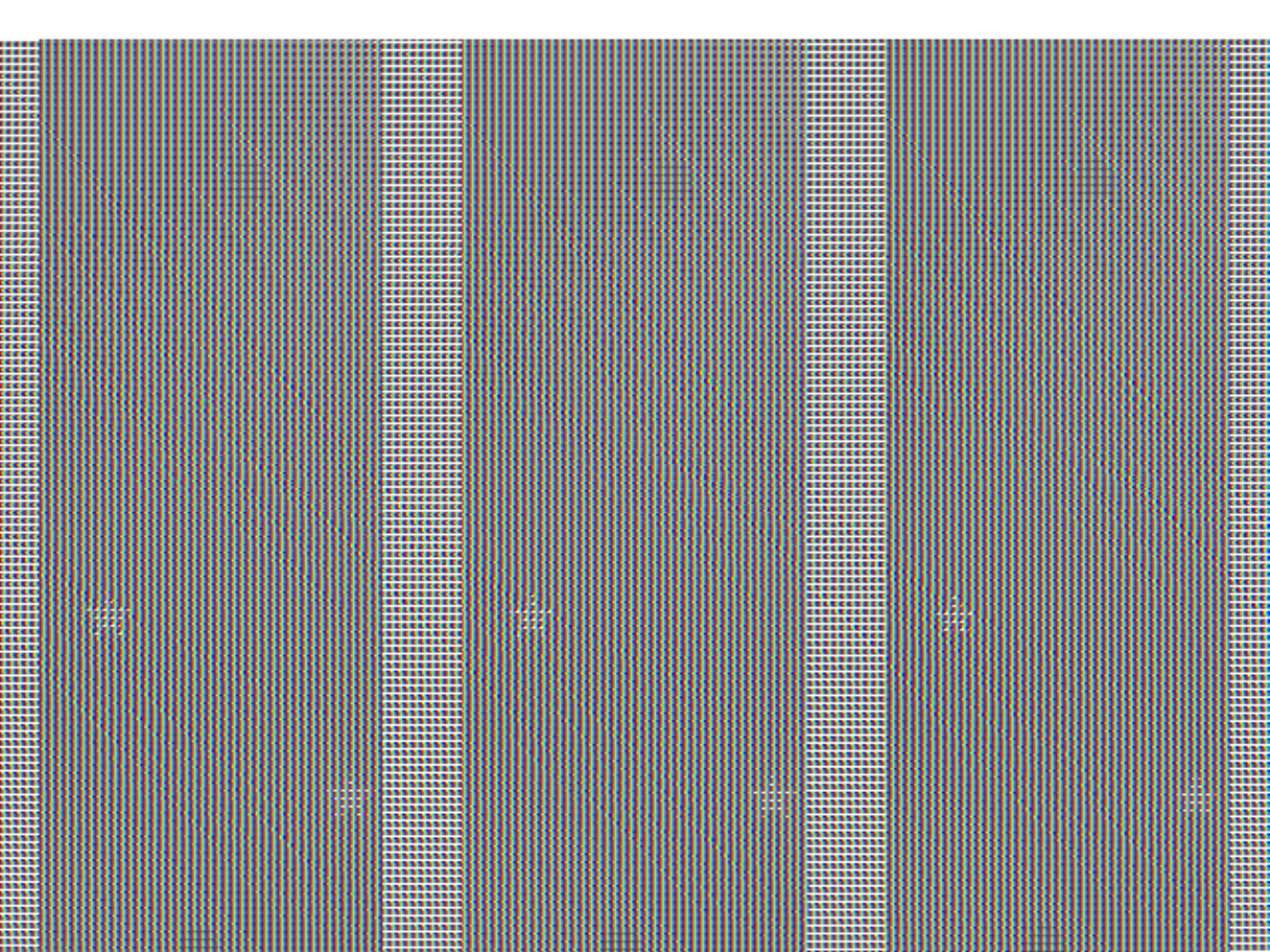} 
            \caption{Linear with OT}
        \end{subfigure}
        \hspace{0.02\textwidth}
        \begin{subfigure}[b]{0.45\textwidth}
            \centering
            \includegraphics[width=\textwidth]{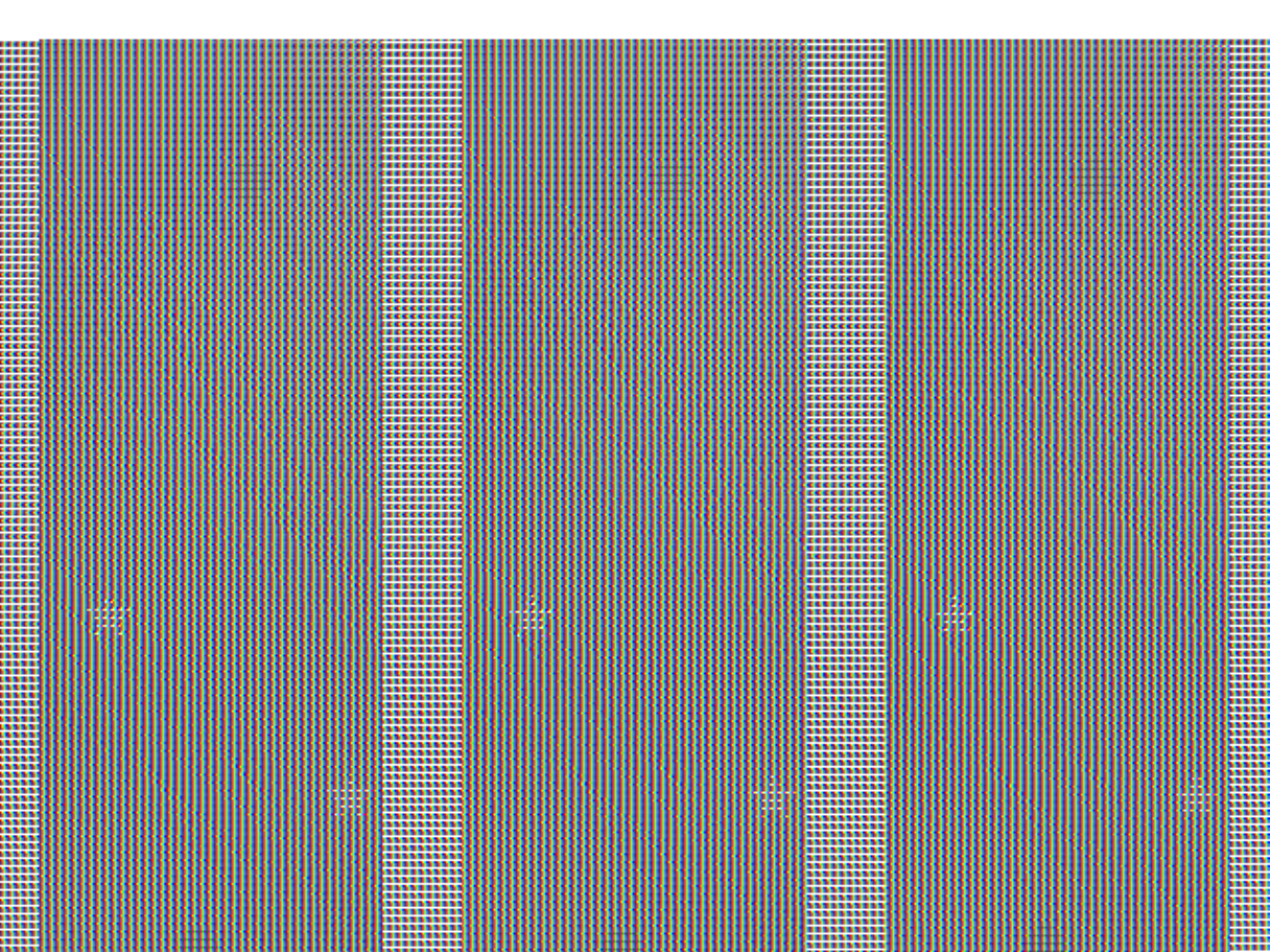}
            \caption{Linear without OT}
        \end{subfigure}
    \end{minipage}
    \hfill
    \begin{minipage}{0.35\textwidth}
        \centering
        \caption{Linear interpolation paths for the Longer-run dataset with and without OT. Saddle points are labeled. Only 100 paths are selected.}
        \label{fig:2d_linear}
    \end{minipage}
\end{figure}

\begin{figure}[]
    \centering
    \begin{subfigure}[b]{0.23\textwidth}
        \centering
        \includegraphics[width=\textwidth]{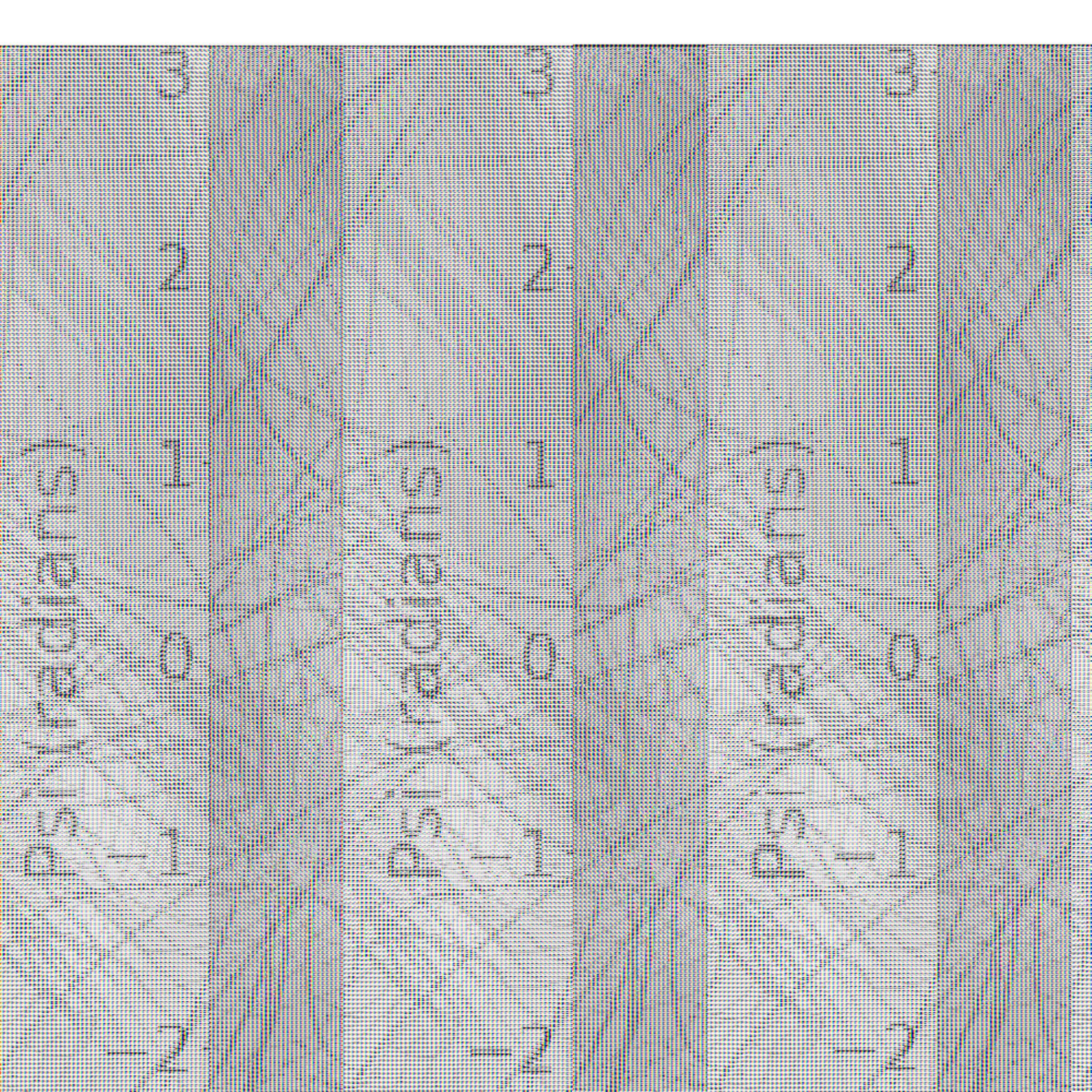}
        \caption{Cartesian without OT}
        \label{Cart_prod}
    \end{subfigure}
    \hfill
    \begin{subfigure}[b]{0.23\textwidth}
        \centering
        \includegraphics[width=\textwidth]{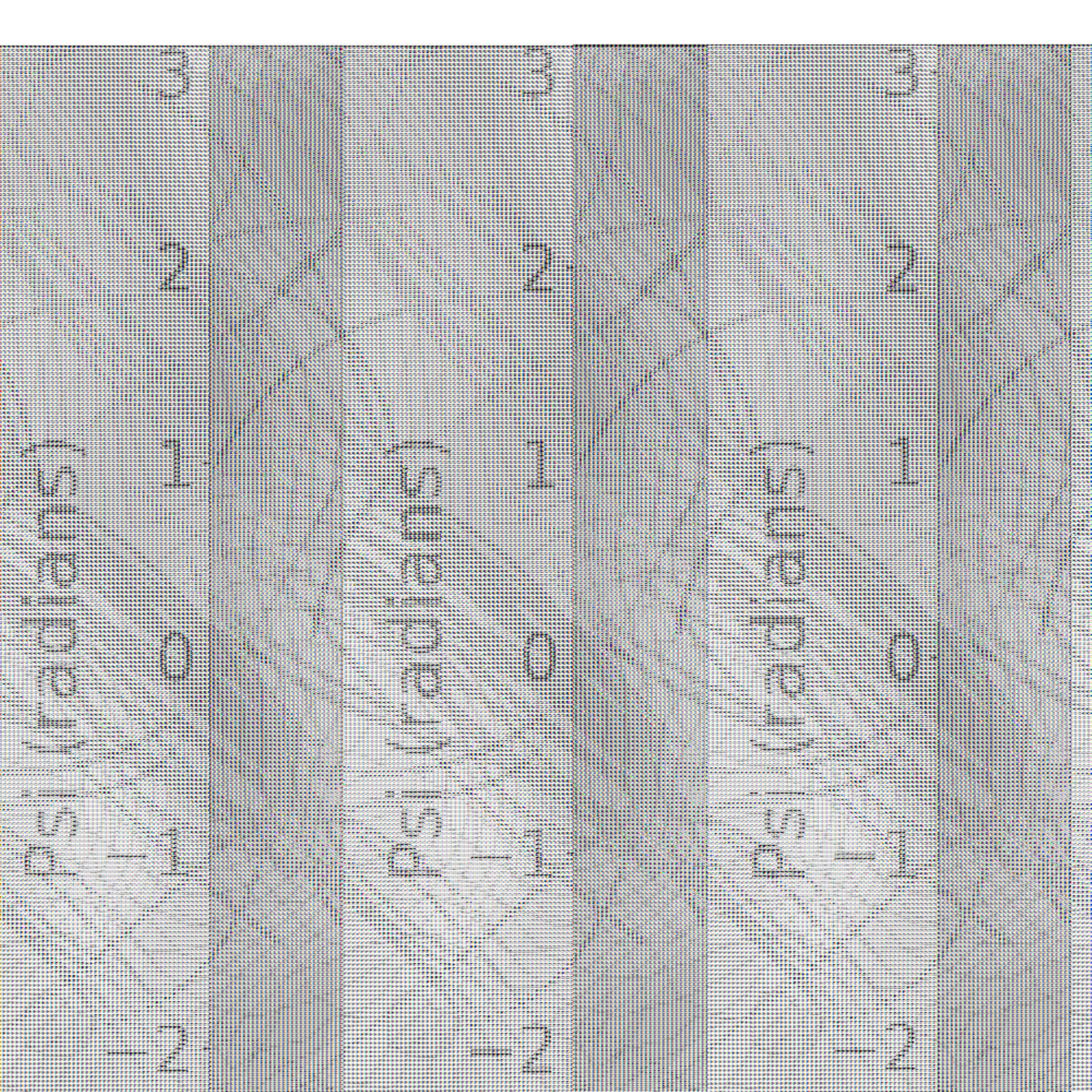}
        \caption{Cartesian with OT}
        \label{cart_OT}
    \end{subfigure}
    \hfill
    \begin{subfigure}[b]{0.23\textwidth}
        \centering
        \includegraphics[width=\textwidth]{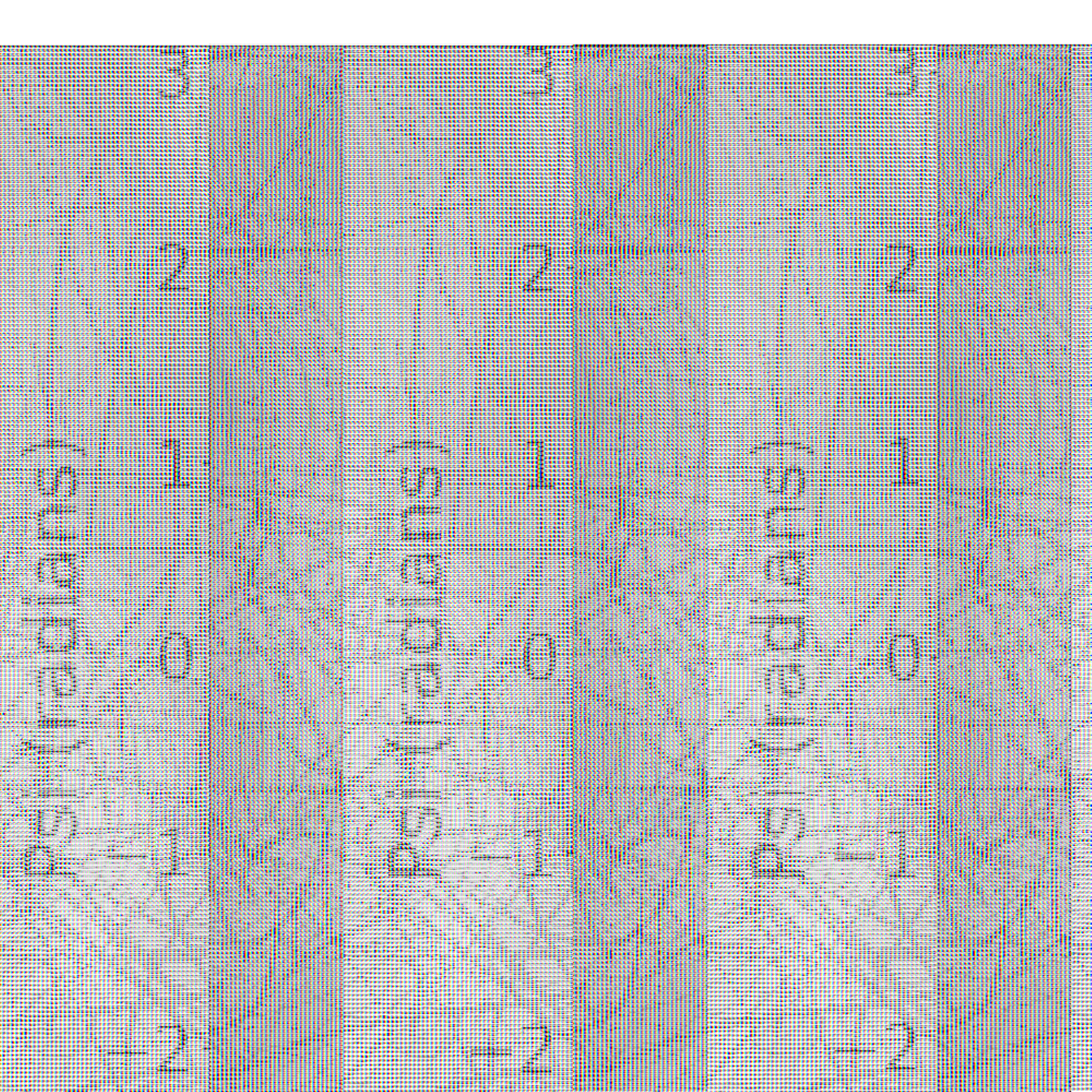}
        \caption{Internal without OT}
        \label{inter_prod}
    \end{subfigure}
    \hfill
    \begin{subfigure}[b]{0.23\textwidth}
        \centering
        \includegraphics[width=\textwidth]{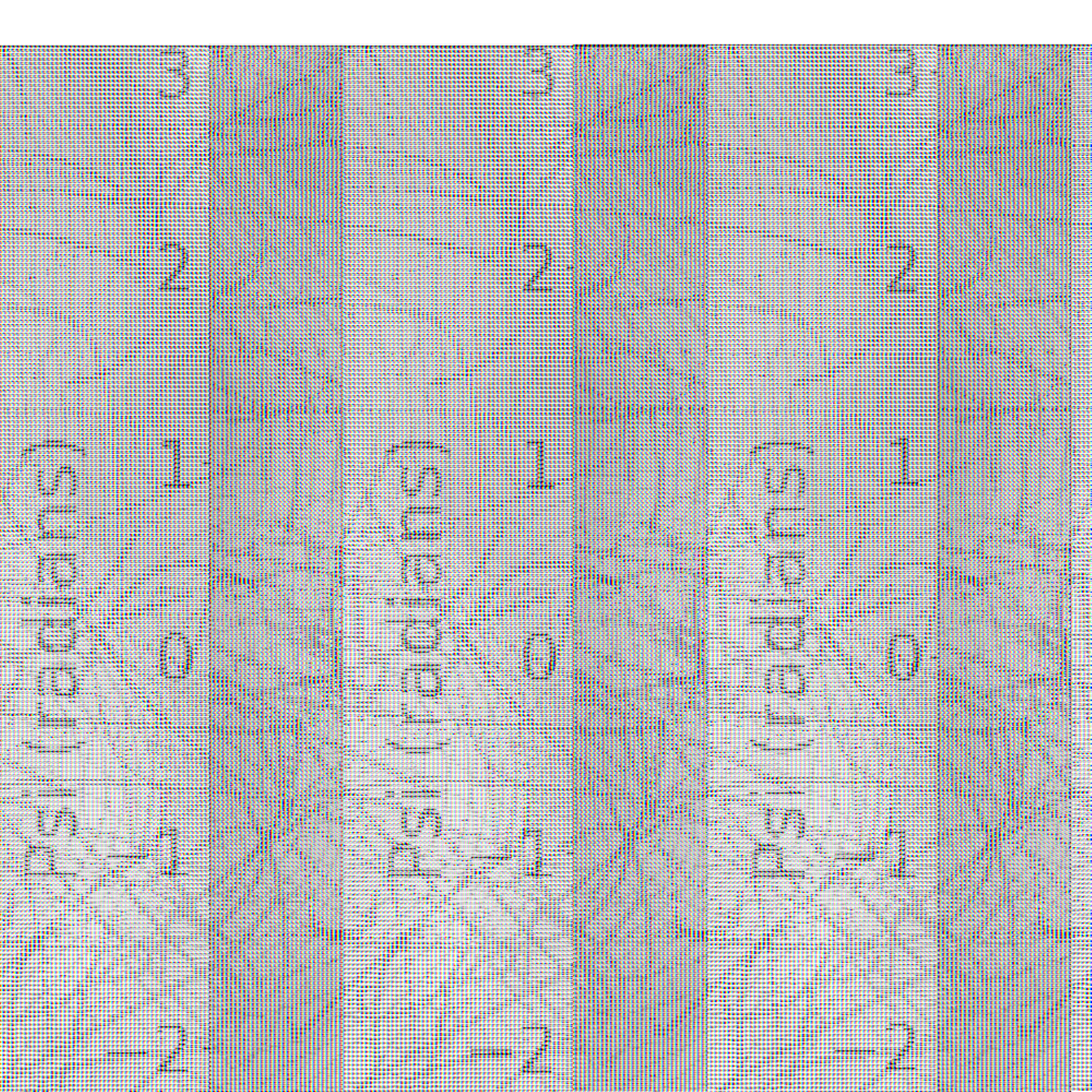}
        \caption{Internal with OT}
        \label{inter_OT}
    \end{subfigure}
    
    \caption{Linear interpolation paths for Alanine Dipeptide with 30K data, of which 50 pathways are randomly selected.}
    \label{fig:ala_linear}
\end{figure}

\begin{figure}[]
    \centering
    \begin{subfigure}[b]{0.23\textwidth}
        \centering
        \includegraphics[width=\textwidth]{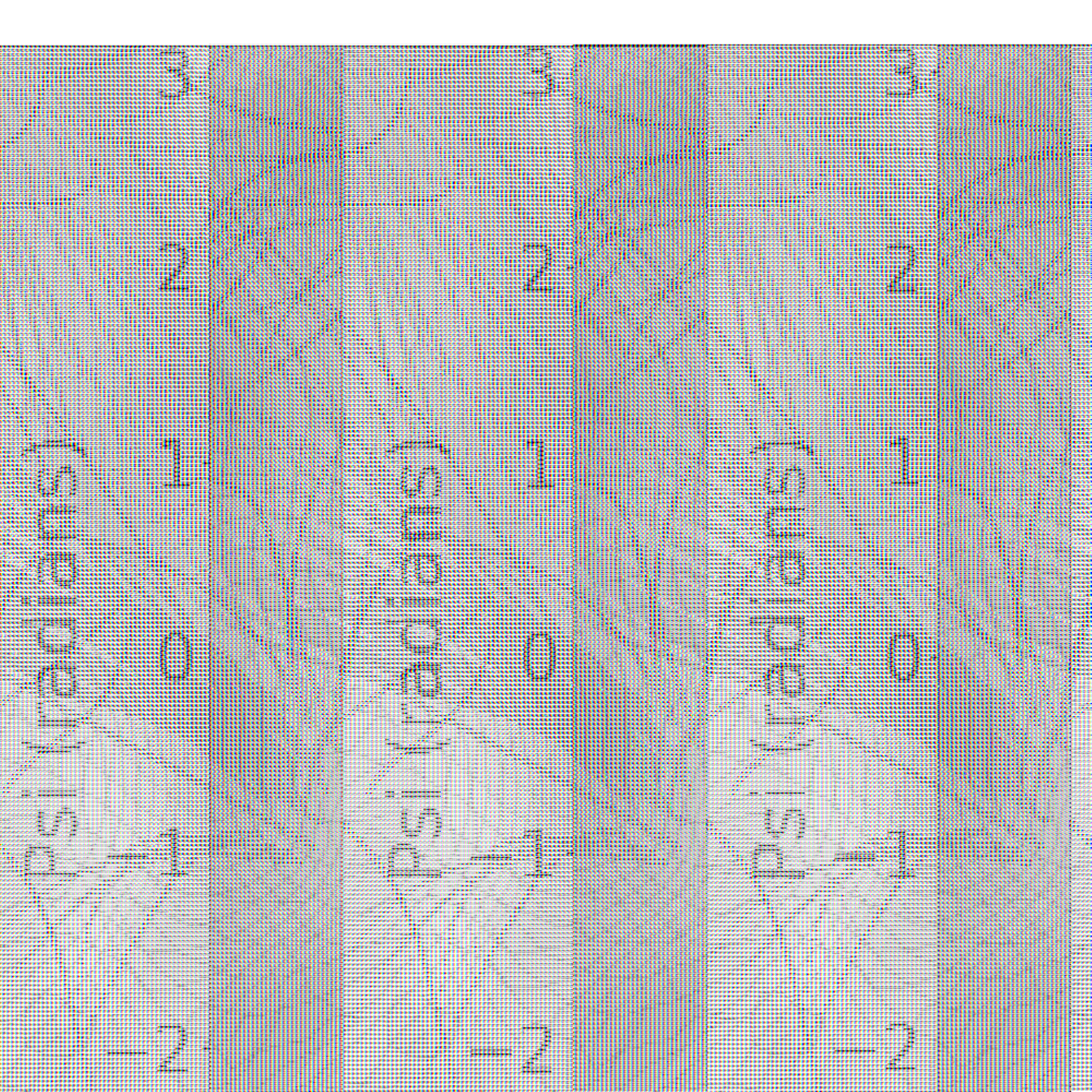}
        \caption{Without resampling}
        \label{iter_0}
    \end{subfigure}
    \hfill
    \begin{subfigure}[b]{0.23\textwidth}
        \centering
        \includegraphics[width=\textwidth]{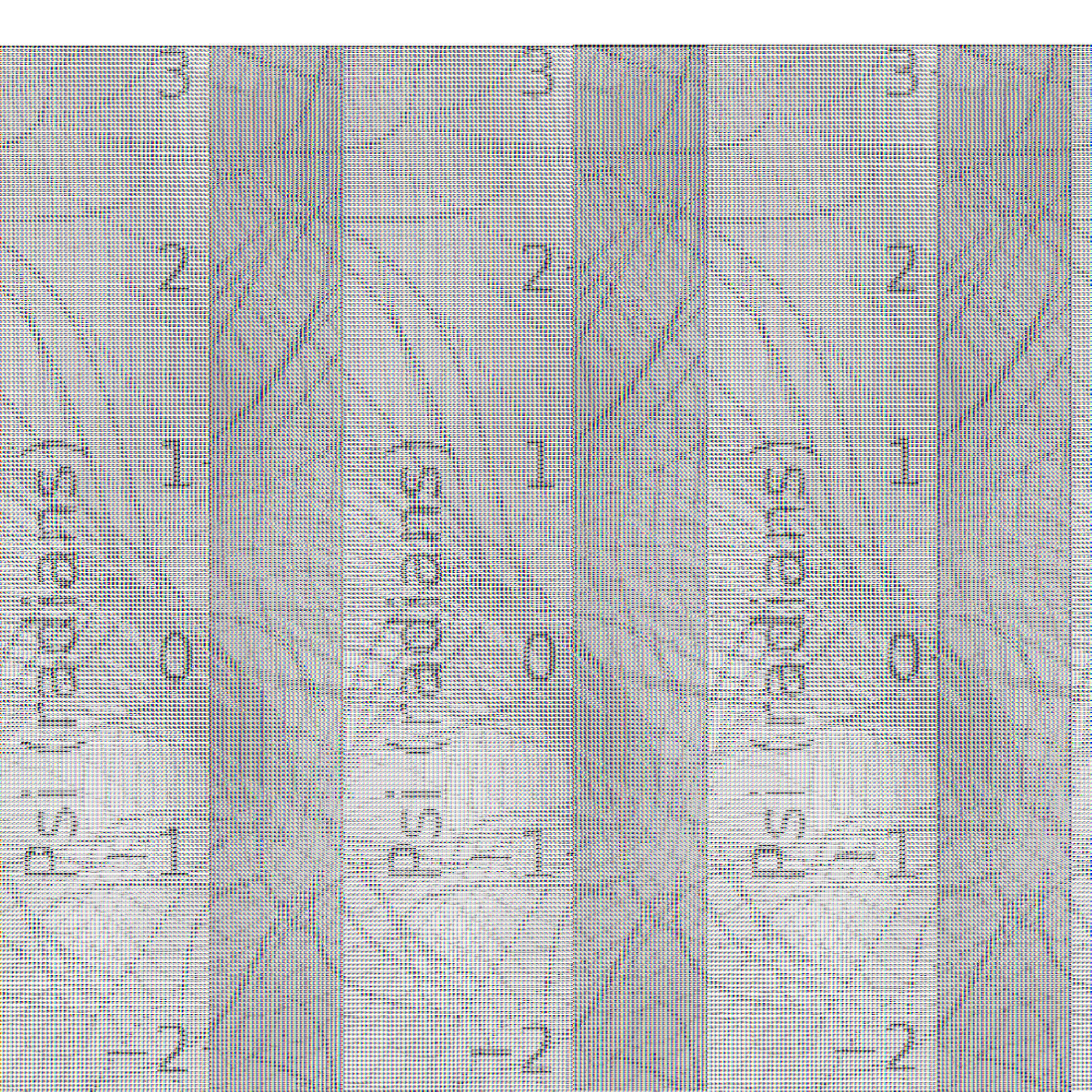}
        \caption{One iterations}
        \label{iter_1}
    \end{subfigure}
    \hfill
    \begin{subfigure}[b]{0.23\textwidth}
        \centering
        \includegraphics[width=\textwidth]{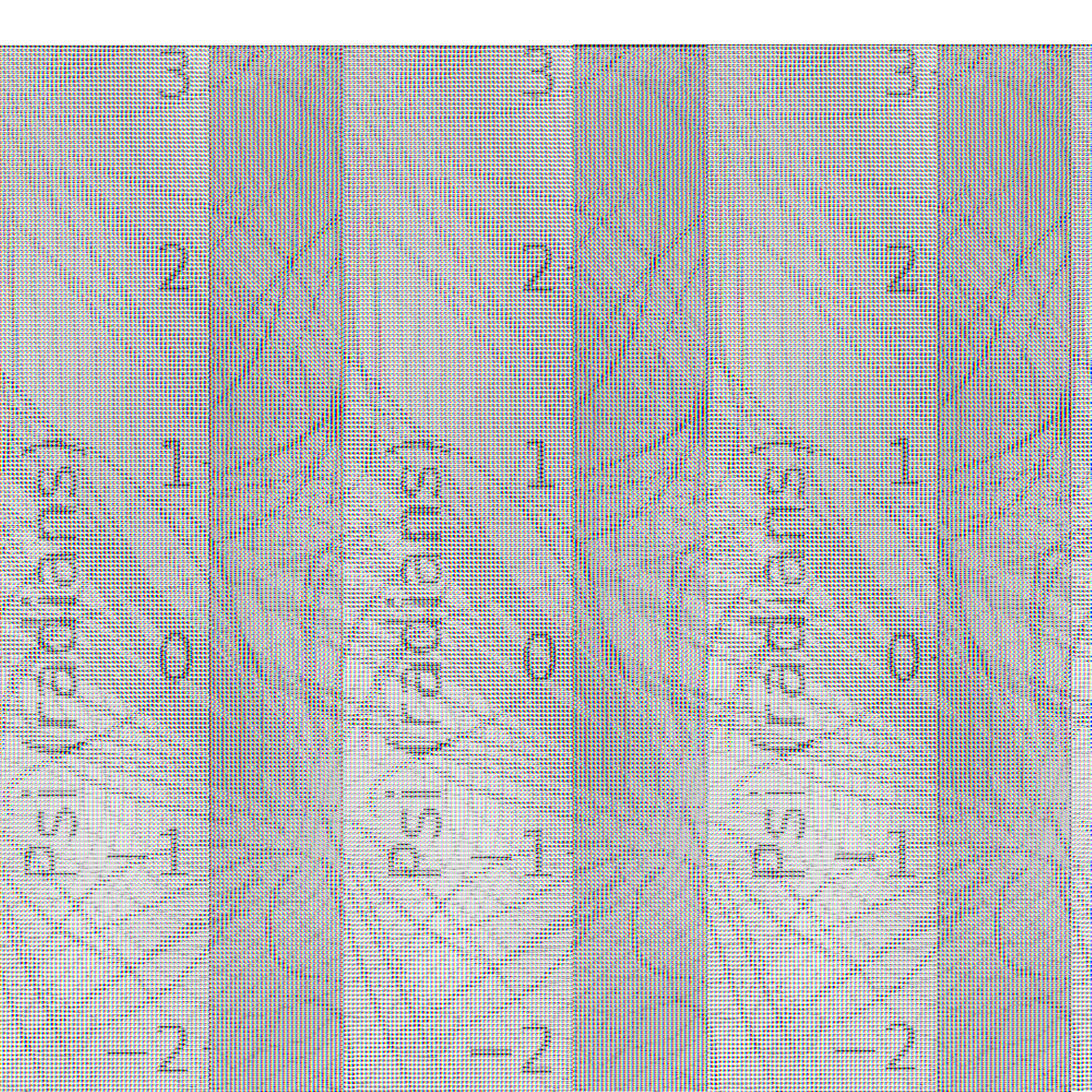}
        \caption{Two iterations}
        \label{iter_2}
    \end{subfigure}
    \hfill
    \begin{subfigure}[b]{0.23\textwidth}
        \centering
        \includegraphics[width=\textwidth]{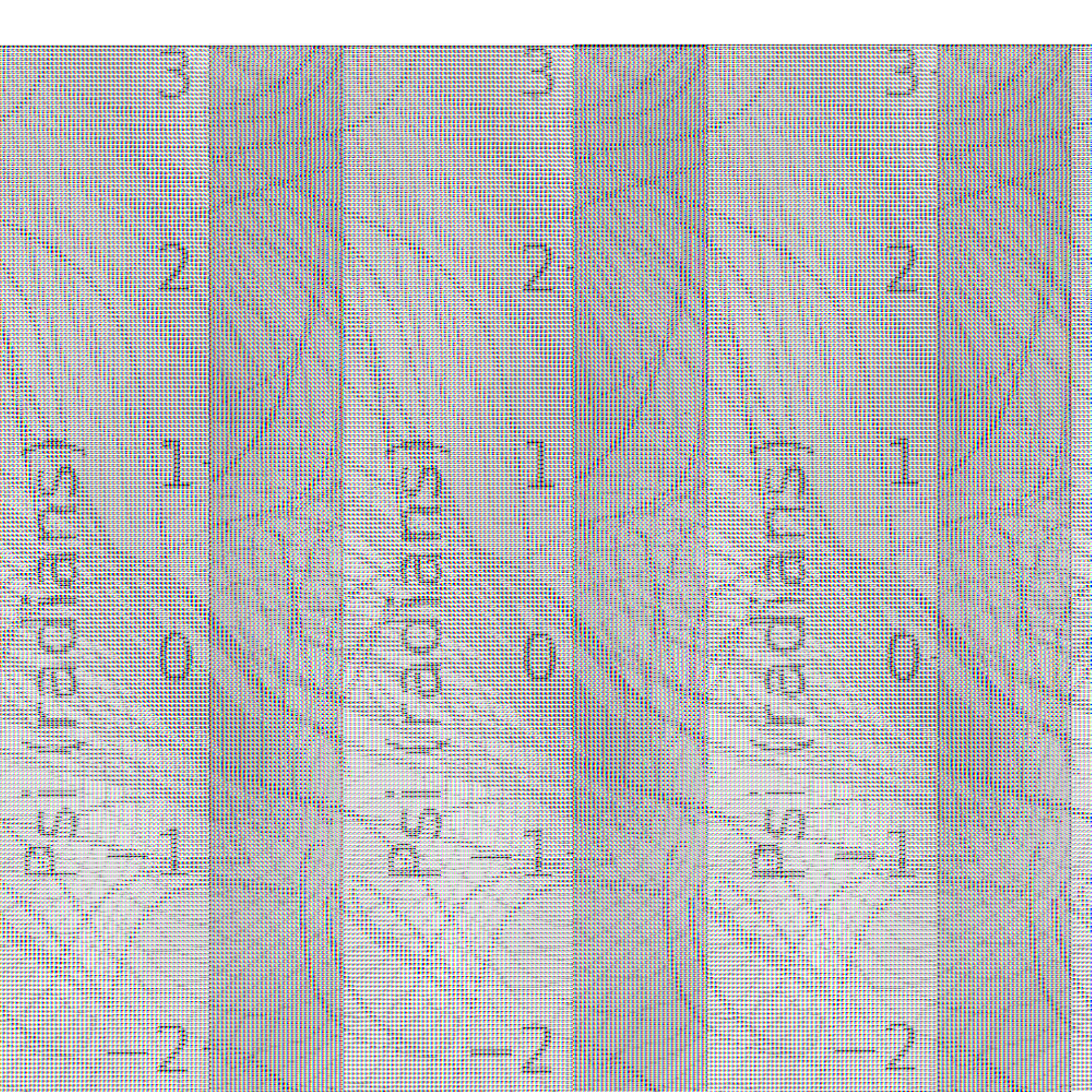}
        \caption{Three iterations}
        \label{iter_3}
    \end{subfigure}
    
    \caption{Qualitative evaluation of resampling procedures on Alanine Dipeptide. Results for resampling in the Cartesian coordinate system with multiple iterations are reported.}
    \label{fig:iter}
\end{figure}

\begin{figure}[]
    \centering
    \begin{subfigure}[b]{0.23\textwidth}
        \centering
        \includegraphics[width=\textwidth]{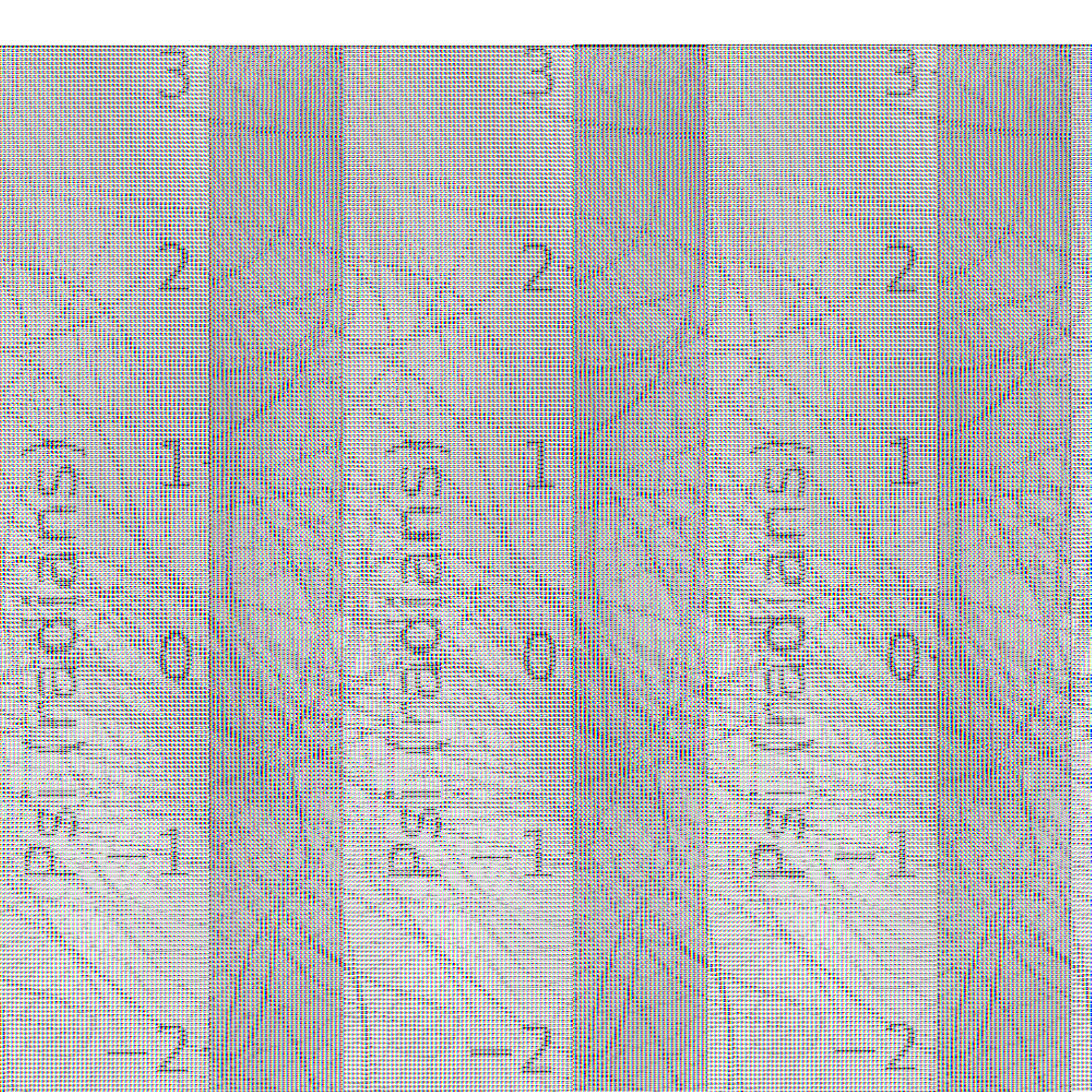}
        \caption{Without reflow}
    \end{subfigure}
    \hfill
    \begin{subfigure}[b]{0.23\textwidth}
        \centering
        \includegraphics[width=\linewidth]{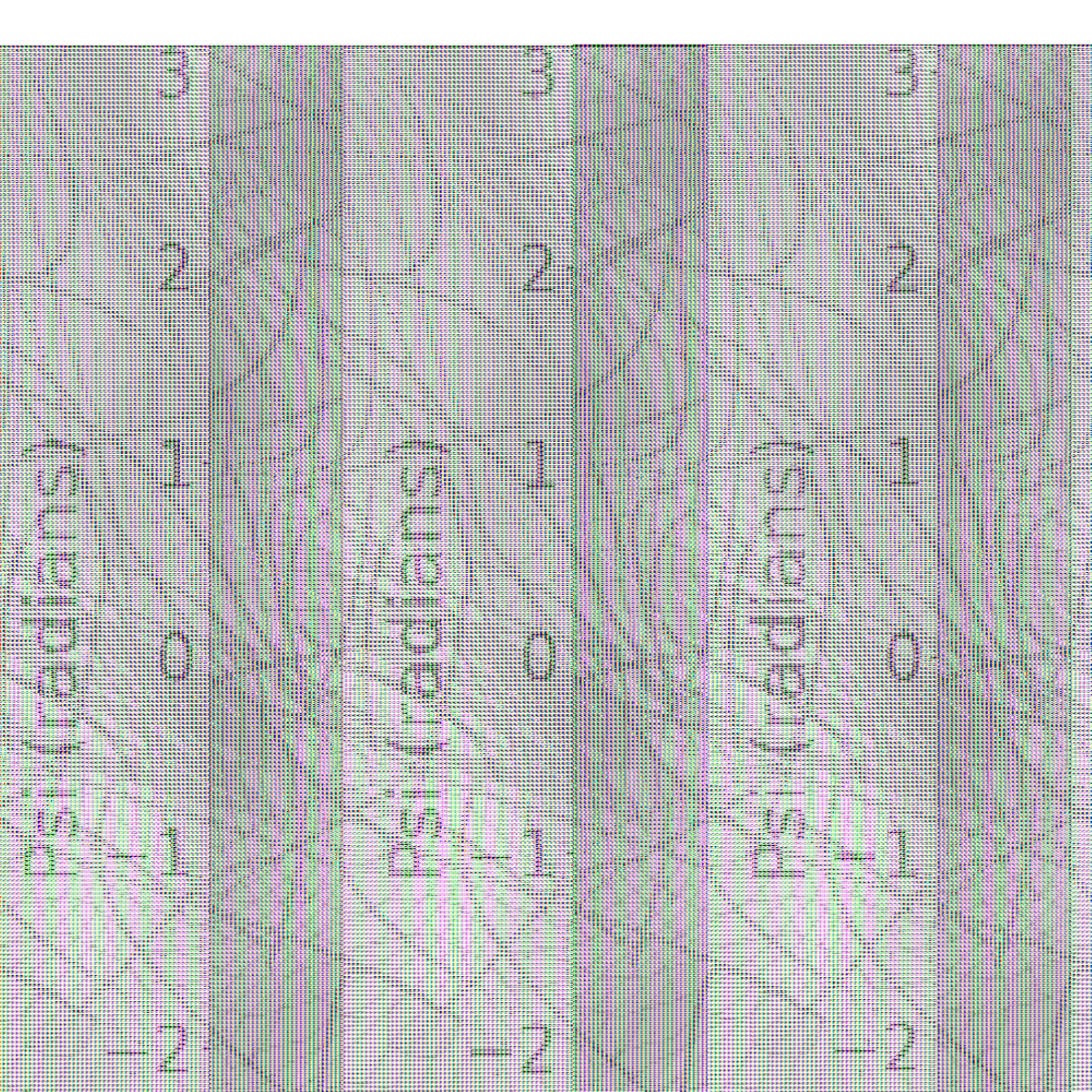}
        \caption{One iterations}
    \end{subfigure}
    \hfill
    \begin{subfigure}[b]{0.23\textwidth}
        \centering
        \includegraphics[width=\textwidth]{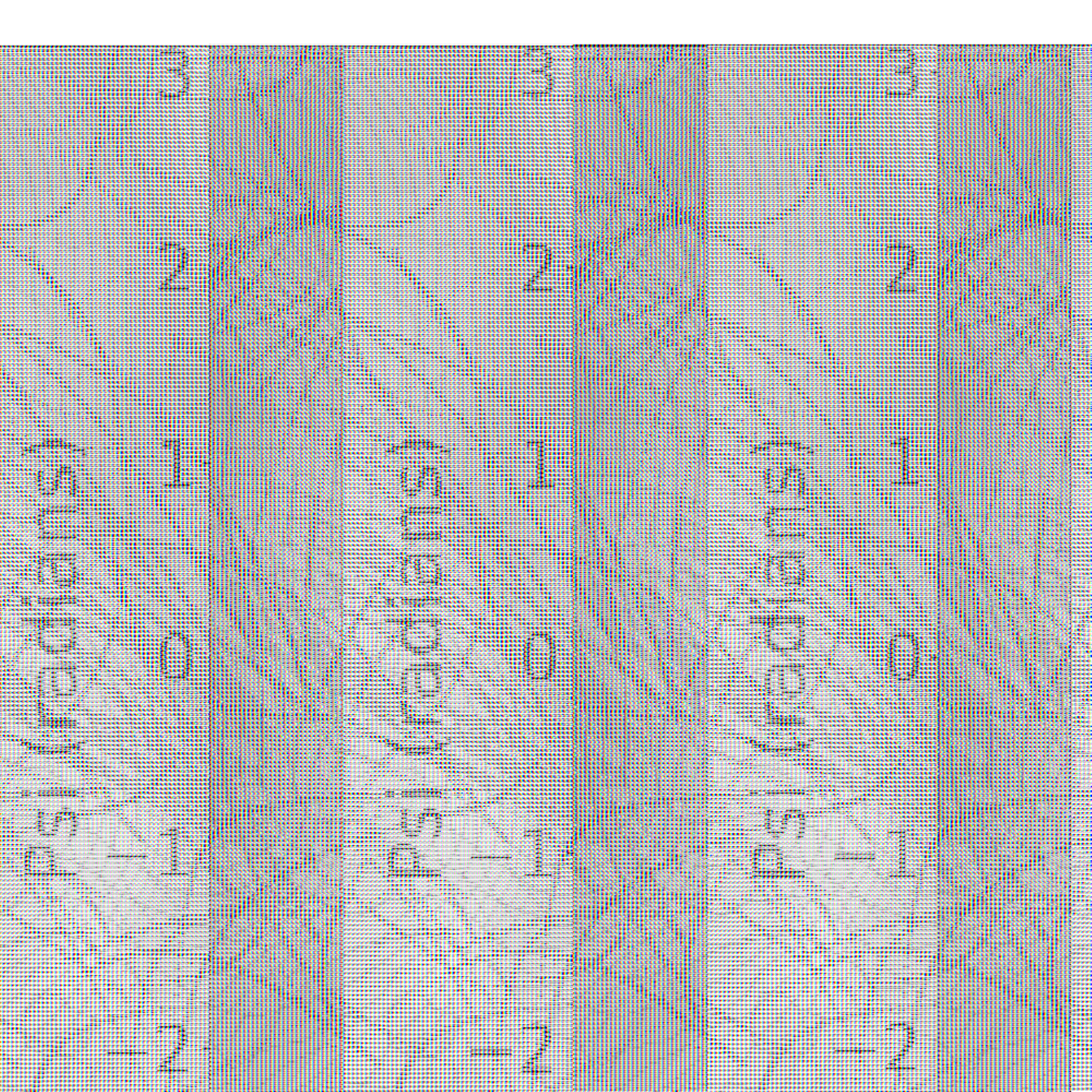}
        \caption{Two iterations}
    \end{subfigure}
    \hfill
    \begin{subfigure}[b]{0.23\textwidth}
        \centering
        \includegraphics[width=\textwidth]{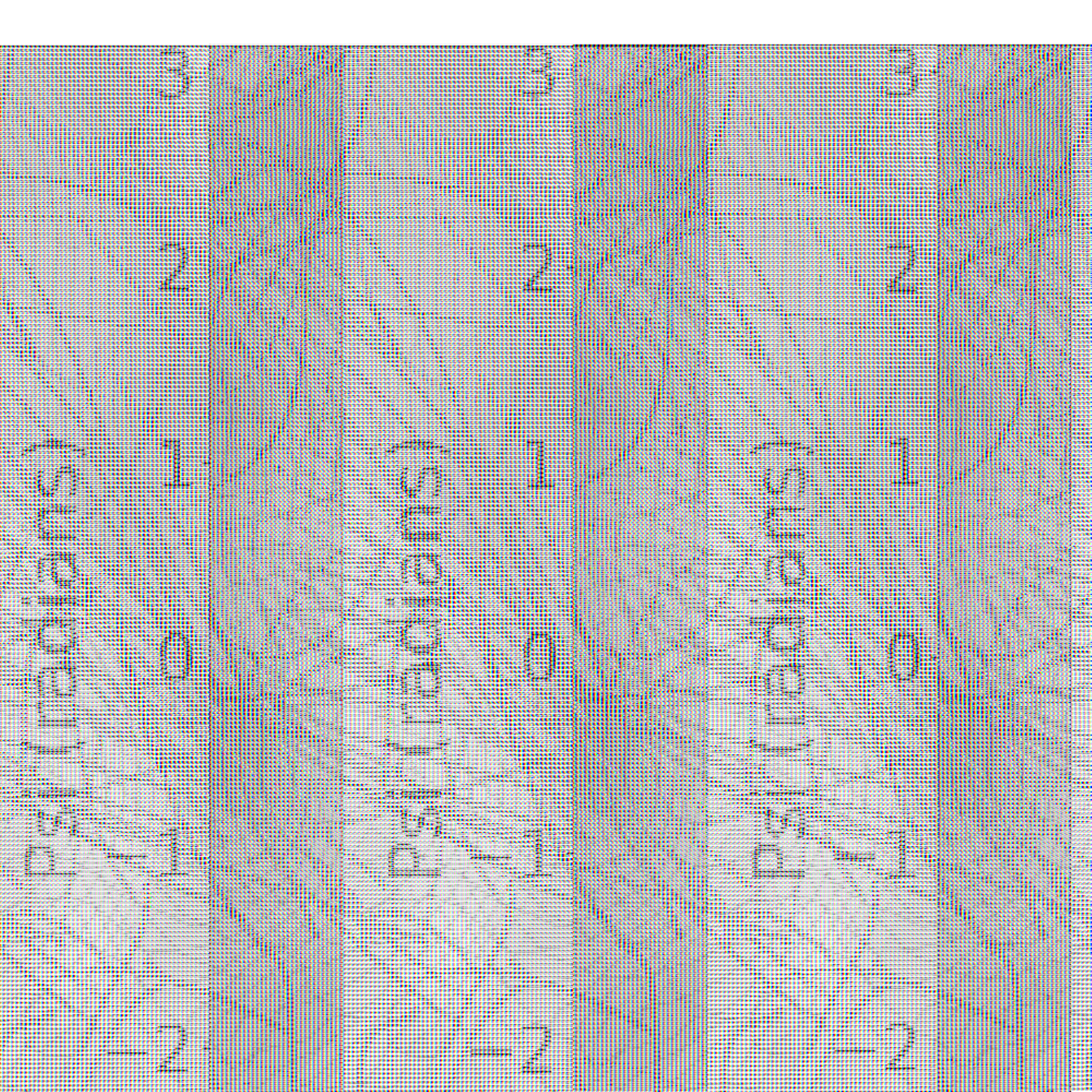}
        \caption{Three iterations}
    \end{subfigure}
    
    \caption{Qualitative evaluation of reflow procedures on Alanine Dipeptide. Results for reflow iterations in the Cartesian coordinate system with multiple iterations are reported. 50 paths are randomly selected.}
\end{figure}

\begin{figure}[]
    \centering
    \begin{minipage}{0.6\textwidth}
        \centering
        \begin{subfigure}[b]{0.38\textwidth}
            \centering
            \includegraphics[width=\textwidth]{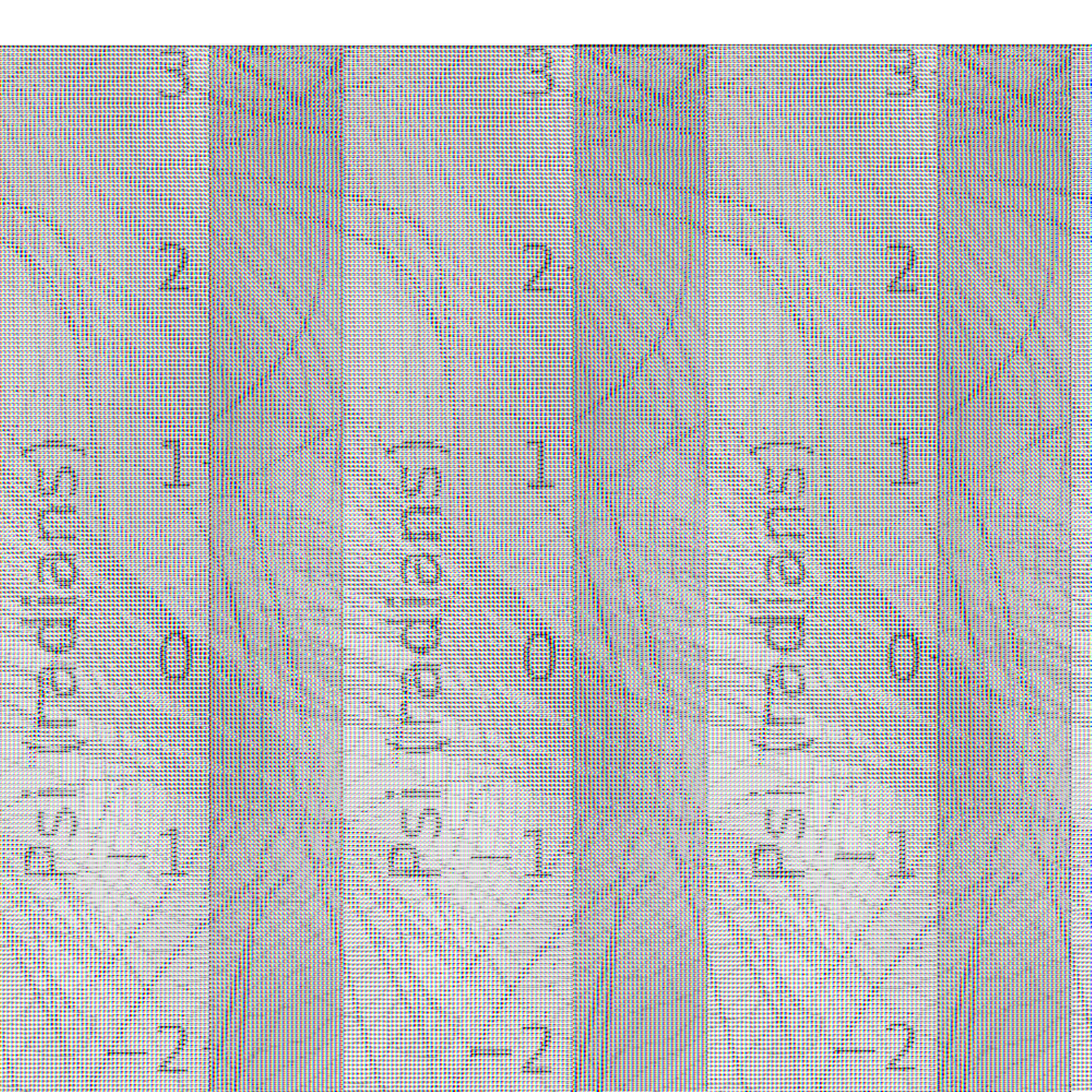}
            \caption{With OT}

        \end{subfigure}
        \hspace{0.02\textwidth}
        \begin{subfigure}[b]{0.38\textwidth}
            \centering
            \includegraphics[width=\textwidth]{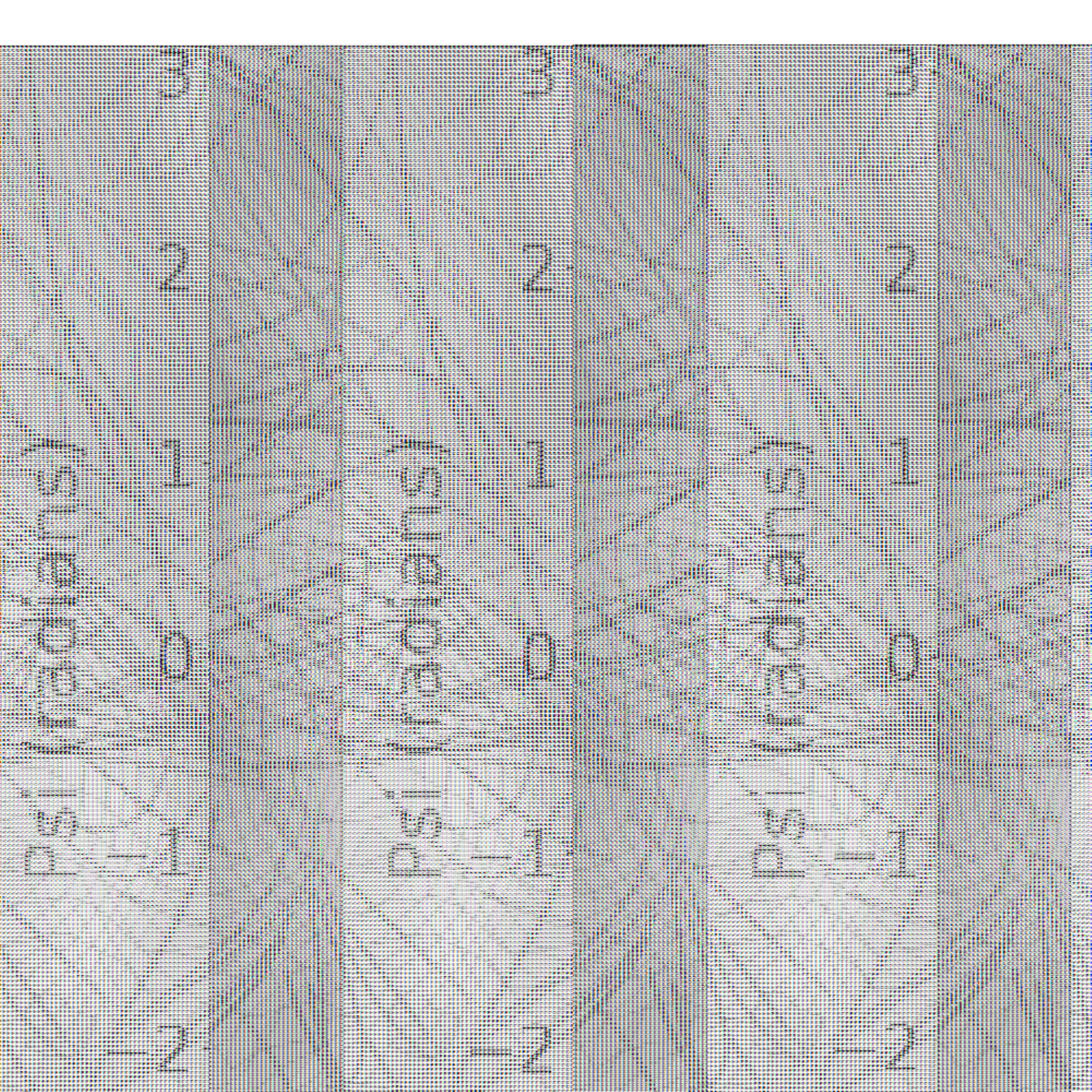}
            \caption{Without OT}
        \end{subfigure}
    \end{minipage}
    \hfill
    \begin{minipage}{0.35\textwidth}
        \centering
        \caption{Qualitative evaluation of initial coupling on Alanine Dipeptide. We show results for both product measure and OT couplings.}
        \label{fig:OT}
    \end{minipage}
\end{figure}

\begin{figure}[]
    \centering
    \begin{subfigure}[b]{0.23\textwidth}
        \centering
        \includegraphics[width=\textwidth]{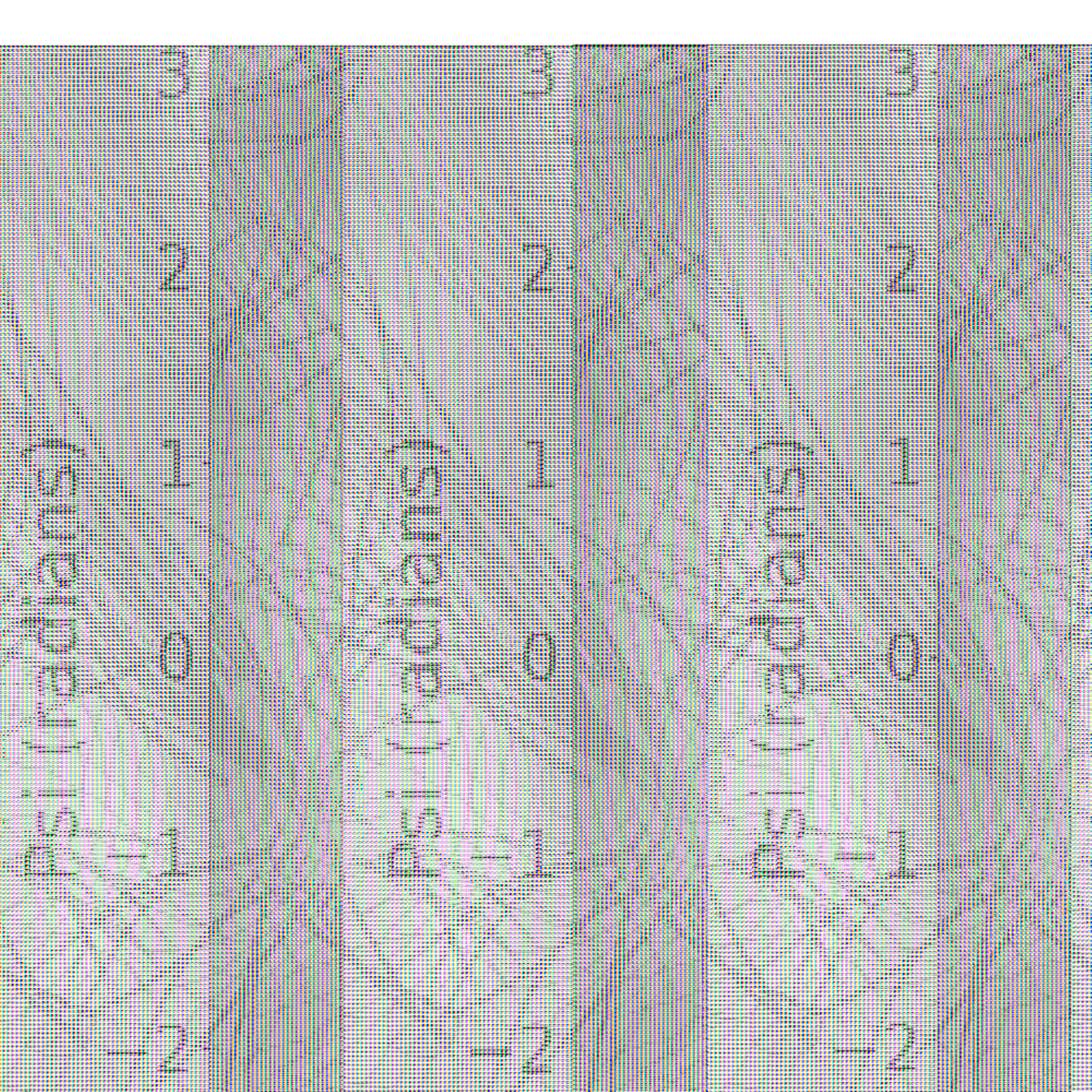}
        \caption{5,000 sample pairs}
    \end{subfigure}
    \hfill
    \begin{subfigure}[b]{0.23\textwidth}
        \centering
        \includegraphics[width=\textwidth]{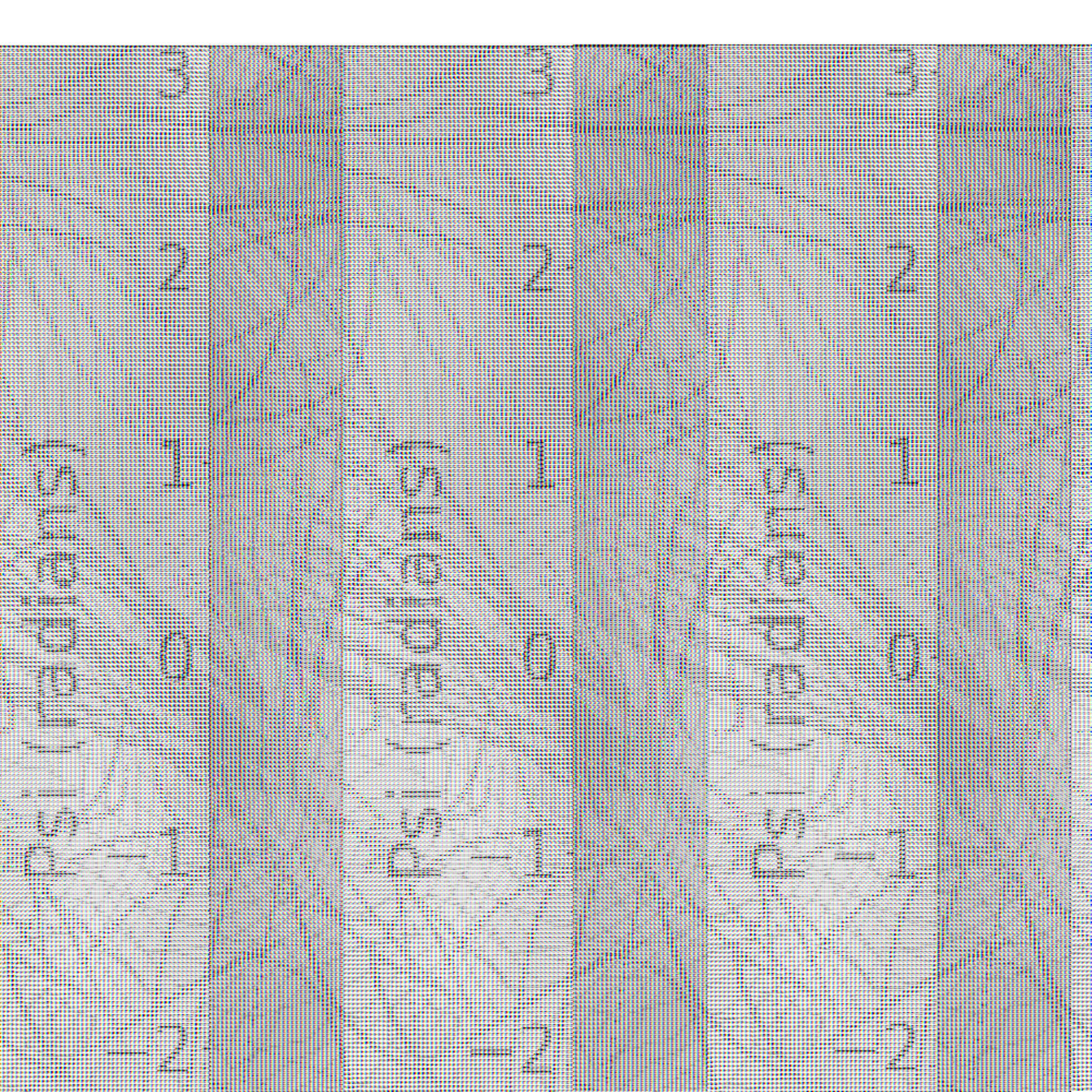}
        \caption{10,000 sample pairs}
    \end{subfigure}
    \hfill
    \begin{subfigure}[b]{0.23\textwidth}
        \centering
        \includegraphics[width=\textwidth]{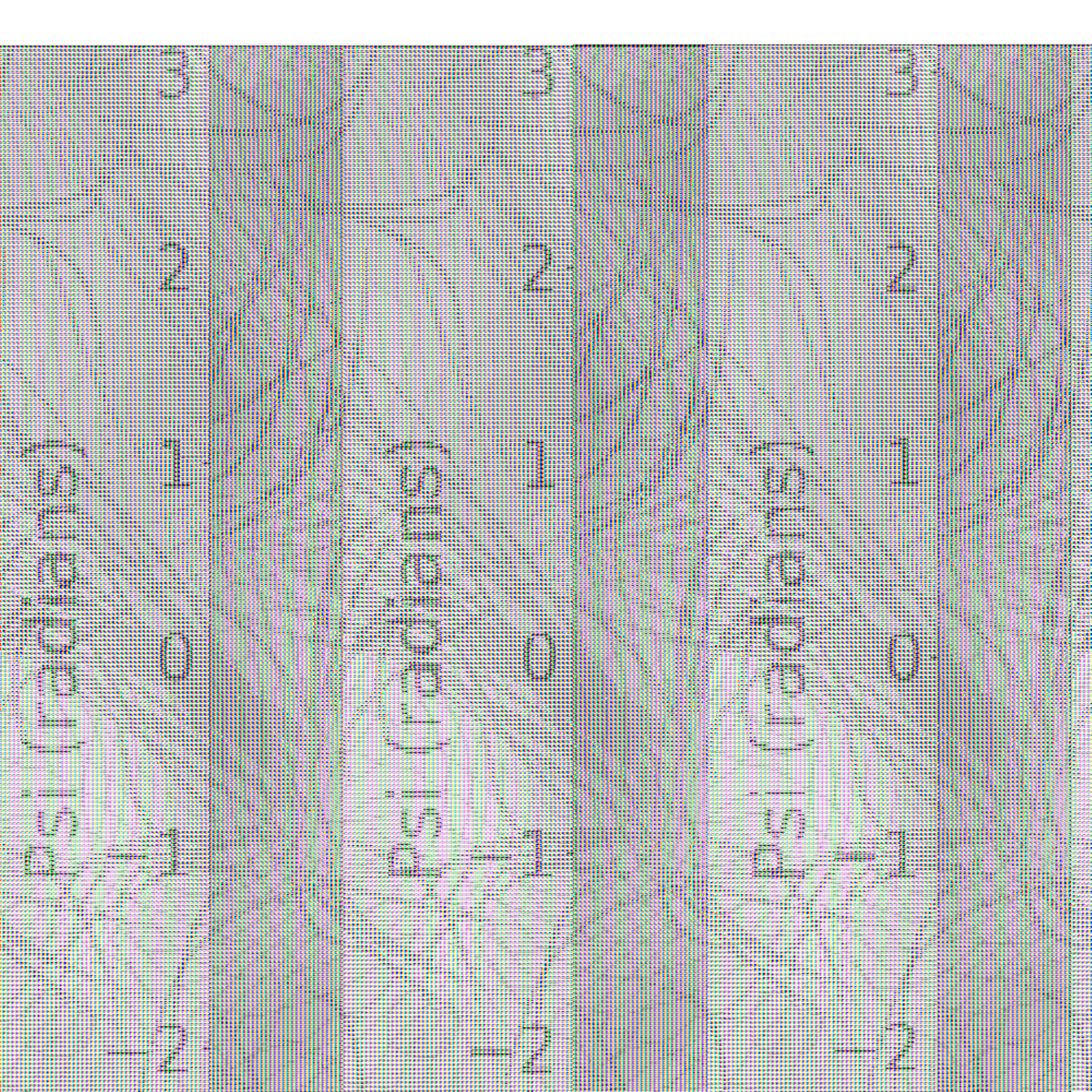}
        \caption{20,000 sample pairs}
    \end{subfigure}
    \hfill
    \begin{subfigure}[b]{0.23\textwidth}
        \centering
        \includegraphics[width=\textwidth]{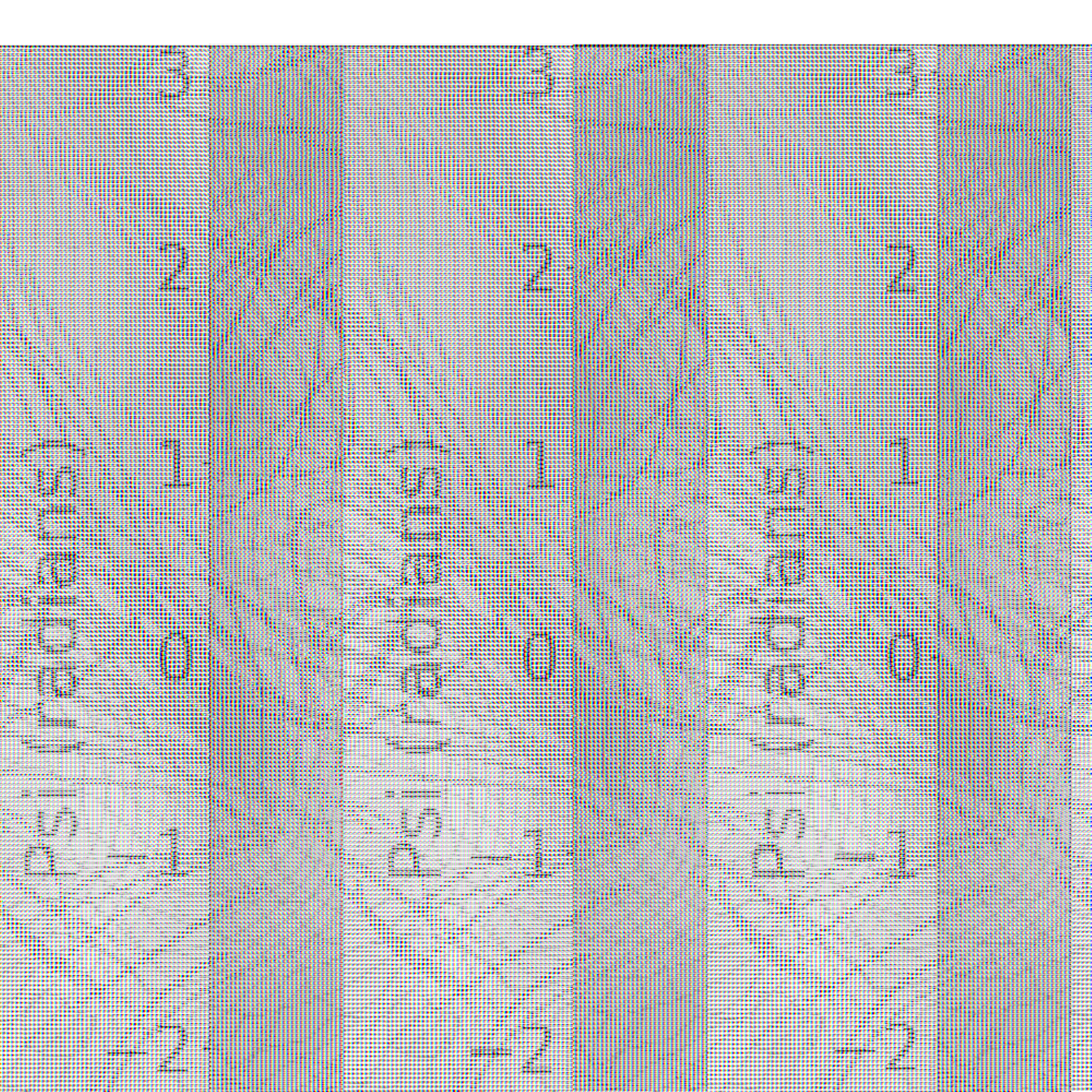}
        \caption{30,000 sample pairs}
    \end{subfigure}
    
    \caption{Qualitative evaluation of data sample sizes on Alanine Dipeptide. 50 randomly selected transition paths are shown for models trained with 5,000, 10,000, 20,000, and 30,000 data sample pairs.}
    \label{fig:dataset_comparison}
\end{figure}

\begin{figure}[]
    \centering
    \begin{subfigure}[b]{0.23\textwidth}
        \centering
        \includegraphics[width=\textwidth]{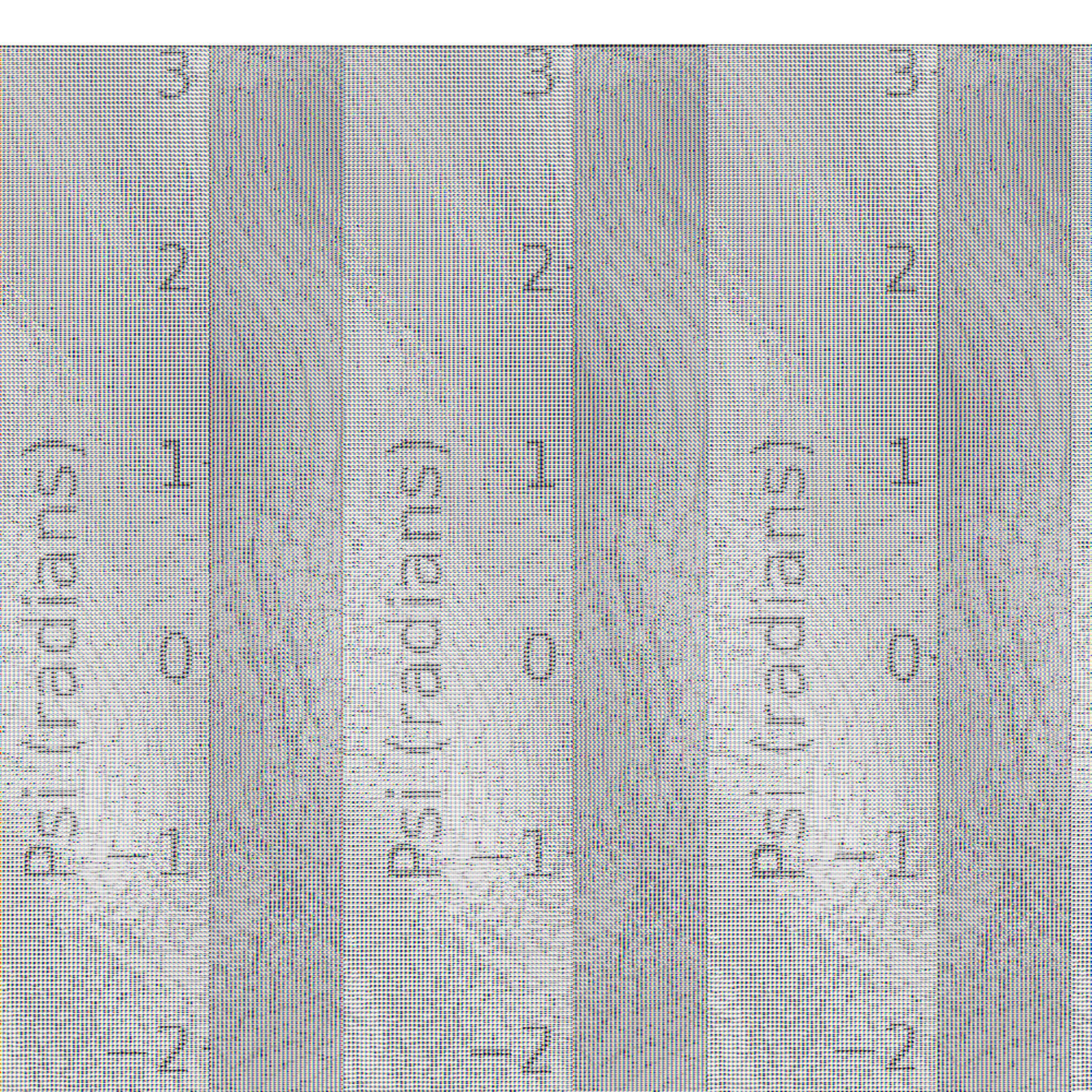}
        \caption{100 data pairs, OT}
    \end{subfigure}
    \hfill
    \begin{subfigure}[b]{0.23\textwidth}
        \centering
        \includegraphics[width=\textwidth]{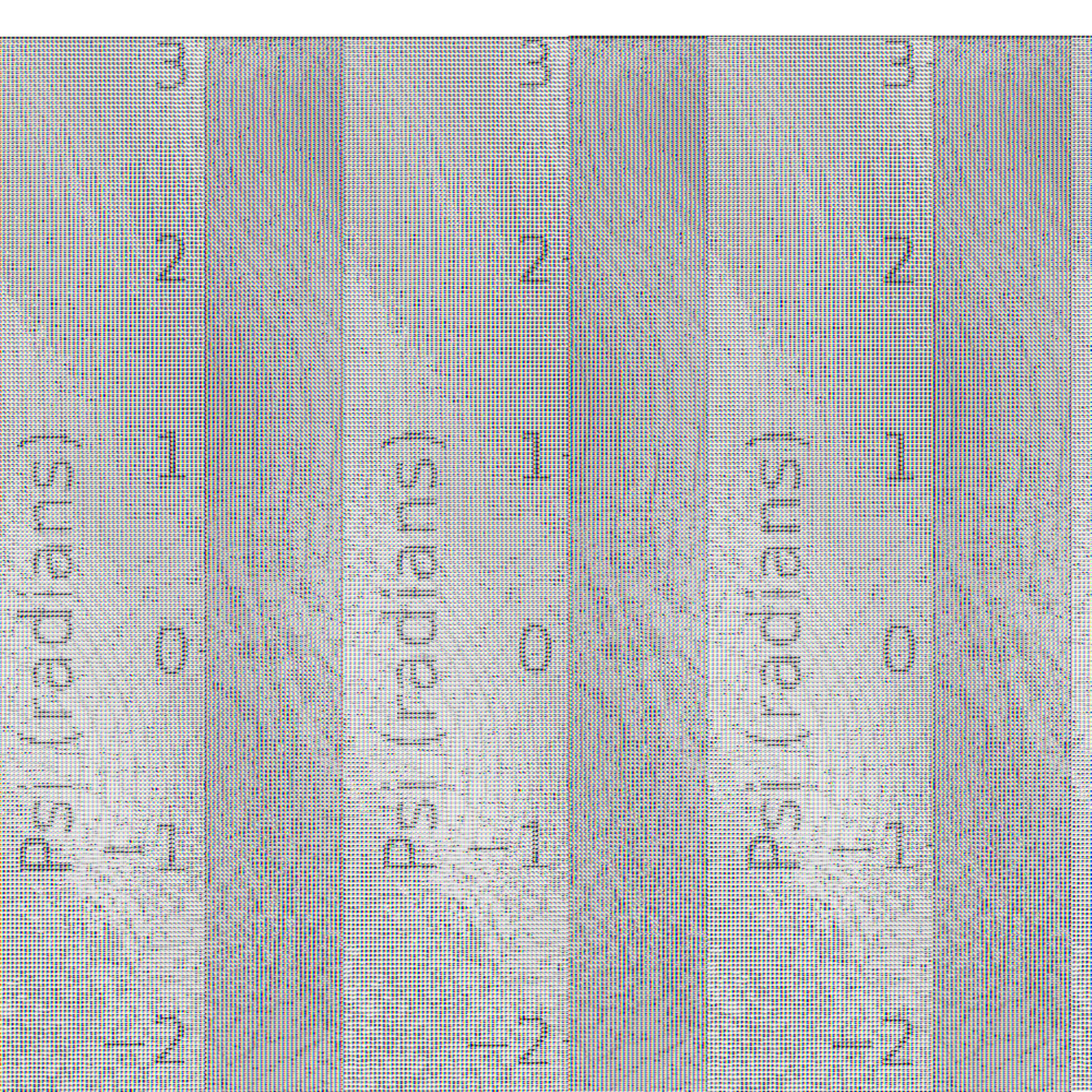}
        \caption{100 data pairs, no OT}
    \end{subfigure}
    \hfill
    \begin{subfigure}[b]{0.23\textwidth}
        \centering
        \includegraphics[width=\textwidth]{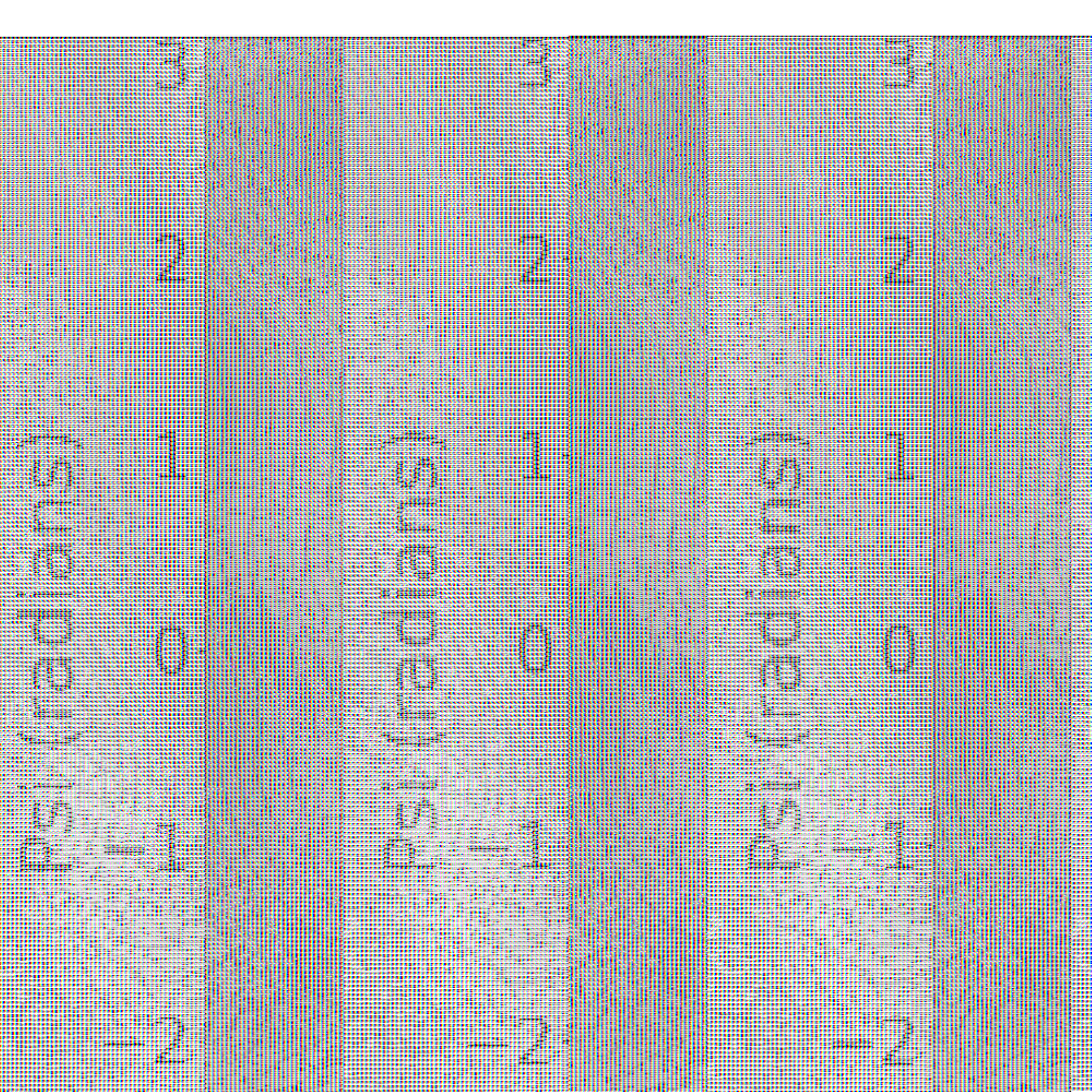}
        \caption{2000 data pairs, OT}
    \end{subfigure}
    \hfill
    \begin{subfigure}[b]{0.23\textwidth}
        \centering
        \includegraphics[width=\textwidth]{picture/neb_data/neb_2000_withoutOT.pdf}
        \caption{2000 data pairs, no OT}
    \end{subfigure}
    
    \caption{Interpolation of pairwise distances for Alanine Dipeptide. We show results for both product measure and OT couplings with 100 and 2000 sample pairs. 50 paths are selected randomly.}
    \label{fig:neb}
\end{figure}

\end{document}